\PassOptionsToPackage{dvipsnames, table, xcdraw}{xcolor}
\documentclass[]{usc}
\usepackage{url}

\usepackage{amsthm}
\usepackage{minitoc}

\usepackage{array}
\definecolor{linkColor}{HTML}{E74C3C}
\definecolor{pearcomp}{HTML}{B97E29}
\definecolor{citeColor}{HTML}{2980B9}
\definecolor{urlColor}{HTML}{1D2DEC}
\definecolor{conjColor}{HTML}{9ab569}
\definecolor{jigsawGreen}{HTML}{1f78b4}
\definecolor{promptblue}{RGB}{44,95,153}
\definecolor{promptred}{RGB}{180,30,30}
\definecolor{headergray}{RGB}{60, 60, 60} 
\definecolor{targetred}{HTML}{2ca25f} 
\definecolor{rowgray}{RGB}{245, 245, 245}
\lstdefinelanguage{prompt}{
  basicstyle=\ttfamily\small,
  showstringspaces=false,
  breaklines=true,
  columns=fullflexible
}

\usepackage{amsmath,amssymb}
\usepackage{bm}
\usepackage[capitalise]{cleveref}
\crefname{figure}{Figure}{figures}
\Crefname{figure}{Figure}{Figures}
\crefname{table}{Table}{tables}
\Crefname{table}{Table}{Tables}
\crefname{equation}{Equation}{equations}
\Crefname{equation}{Equation}{Equations}
\usepackage{mathtools}
\usepackage{stmaryrd}
\usepackage{wrapfig}
\usepackage{microtype}
\usepackage{graphicx}
\usepackage{multirow}
\usepackage{rotating} % For text rotation
\usepackage{xspace}
\usepackage{colortbl}
\newcommand{\redtext}[1]{\textcolor{targetred}{#1}}
\usepackage{lmodern} % Use vector fonts
\usepackage{float}
\tikzset{
  invisible/.style={opacity=0},
  visible on/.style={alt={#1{}{invisible}}},
  alt/.code args={<#1>#2#3}{
    \alt<#1>{\pgfkeysalso{#2}}{\pgfkeysalso{#3}}
  },
}

\newtheorem{definition}{\textbf{Definition}}

\newtheorem{proposition}{\textbf{Proposition}}

\newcommand{\cX}{\mathcal{X}}
\newcommand{\cY}{\mathcal{Y}}

\newcommand{\cD}{\mathcal{D}}

\newcommand{\cL}{\mathcal{L}}

\newcommand{\R}{\mathbb{R}}
\DeclareMathOperator*{\VecOp}{vec}
\DeclareMathOperator*{\MatOp}{mat}

\usepackage{bbold}

\usepackage[normalem]{ulem}
\usetikzlibrary{shapes, positioning}

\newcommand{\methodName}{\textsc{CrispEdit}} 
\newcommand{\ourmethod}{\textcolor{black}{\methodName}\xspace}
\newcommand{\ourmethodseq}{\textcolor{black}{\methodName-\textsc{Seq}}\xspace}
\newcommand{\llama}{\textcolor{black}{LLaMA-3-8B-Instruct}\xspace}
\newcommand{\qwen}{\textcolor{black}{Qwen-2.5-1.5B-Instruct}\xspace}

\renewcommand{\cite}[1]{\citep{#1}}

\usepackage{color}
\definecolor{cm}{RGB}{0,0,200}
\definecolor{purple}{RGB}{200,0,200}

\usepackage[intoc]{nomencl}
\makenomenclature

\makeatletter
\newcommand{\vast}{\bBigg@{2.5}}
\newcommand{\Vast}{\bBigg@{5}}
\makeatother

\usepackage{bbm}

\usepackage[utf8]{inputenc}
\usepackage[LGR,T1]{fontenc}

\usepackage{breqn}
\usepackage{enumitem,kantlipsum}

\definecolor{rliableolive}{HTML}{BBCC33}
\definecolor{rliableblue}{HTML}{77AADD}
\definecolor{rliablered}{HTML}{EE8866}

\definecolor{takeawaycolor}{HTML}{FFF1E6}
\definecolor{takeawaycolor2}{HTML}{F2F6FF}
\definecolor{takeawaycolor3}{HTML}{EAF6EF}
\definecolor{takeawaycolor4}{HTML}{E6F4F1}
\definecolor{takeawaycolor5}{HTML}{FFF9E5}
\definecolor{takeawayborder}{HTML}{FFD8C2}
\definecolor{takeawayheader}{HTML}{0B3C8C}

\tcbset{
  aibox/.style={
    width=\linewidth,
    top=9pt,
    bottom=4pt,
    colback=takeawaycolor5!100!white,
    colframe=black!75!white,
    colbacktitle=black!75!white,
    boxrule=0.75pt,
    enhanced,
    center,
    attach boxed title to top left={yshift=-0.1in,xshift=0.15in},
    boxed title style={boxrule=0pt,colframe=white,},
  }
}
\newtcolorbox{AIbox}[2][]{aibox,title=#2,#1}

\tcbset{
  greenaibox/.style={
    width=\linewidth,
    top=9pt,
    bottom=4pt,
    colback=takeawaycolor3!100!white,
    colframe=black!75!white,
    colbacktitle=black!75!white,
    boxrule=0.75pt,
    enhanced,
    center,
    attach boxed title to top left={yshift=-0.1in,xshift=0.15in},
    boxed title style={boxrule=0pt,colframe=white,},
  }
}
\newtcolorbox{greenAIbox}[2][]{greenaibox,title=#2,#1}

\usepackage{algorithm}
\usepackage{algorithmic}
\usepackage{enumitem,kantlipsum}

\usepackage{booktabs}
\usepackage{wrapfig}

\tcbuselibrary{skins}
\usepackage{tabularx}

\title{\fontsize{20pt}{24pt}\selectfont CrispEdit: Low-Curvature Projections for \\Scalable Non-Destructive LLM Editing}

\author{Zarif Ikram} \author{Arad Firouzkouhi} \author{Stephen Tu} \author{Mahdi Soltanolkotabi} \author{Paria Rashidinejad}

\affiliation{University of Southern California}

\abstract{ 
A central challenge in large language model (LLM) editing is capability preservation: methods that successfully change targeted behavior can quietly game the editing proxy and corrupt general capabilities, producing degenerate behaviors reminiscent of {proxy/reward hacking}.  We present \ourmethod{}, a scalable and principled second-order editing algorithm that treats capability preservation as an explicit constraint, unifying and generalizing several existing editing approaches. \ourmethod{} formulates editing as constrained optimization and enforces the constraint by projecting edit updates onto the low-curvature subspace of the capability-loss landscape. At the crux of \ourmethod{} is expressing capability constraint via \emph{Bregman divergence}, whose quadratic form yields the Gauss–Newton Hessian exactly and even when the base model is not trained to convergence. We make this second-order procedure efficient at the LLM scale using Kronecker-factored approximate curvature (K-FAC) and a novel \emph{matrix-free projector} that exploits Kronecker structure to avoid constructing massive projection matrices. Across standard model-editing benchmarks, \ourmethod{} achieves high edit success while \textbf{keeping capability degradation below 1\%} on average across datasets, significantly improving over prior editors.}

\date{February 17, 2026}
\website{https://crispedit.github.io}
\code{https://github.com/zarifikram/CrispEdit}
\emails{\email{\{zikram,firouzko,stephen.tu,soltanol,paria.rashidinejad\}@usc.edu}  }

\begin{document}

\maketitle
\section{Introduction}
\label{section:intro}

\begin{wrapfigure}[21]{r}{0.45\textwidth}
    \centering
    \includegraphics[width=\linewidth]{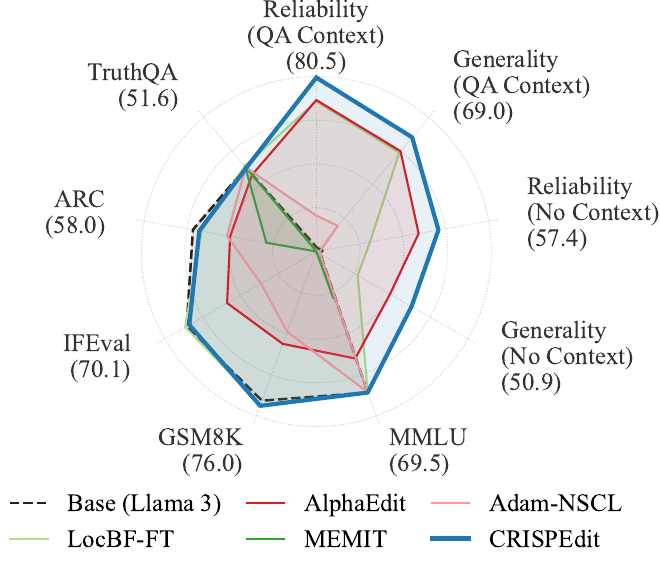}
    \caption{\textbf{Comparison overview of \ourmethod.} \ourmethod achieves strong edit reliability and generality, with and without QA context, while preserving broad base capabilities (MMLU, GSM8K, IFEval, ARC-C, TruthfulQA) on \llama.}
    \label{fig:spider_plot}

\end{wrapfigure}
Large language models (LLMs) are rapidly becoming a shared backbone for knowledge work, spanning search and question answering~\citep{gao2023retrieval, lewis2020retrieval}, science~\citep{jumper2021highly}, software development~\citep{chen2021evaluating}, decision support~\citep{lopez2023can}, and education~\citep{kasneci2023chatgpt}. Yet every day, facts shift, new discoveries land, products ship, hallucinations or unsafe behaviors are uncovered, quickly making the models stale. Retraining from scratch is the cleanest way to absorb this drift, but it is also the most expensive and slowest. Model editing~\citep{sinitsineditable, de2021editing,mitchell2022memory, wang2024knowledge} offers a practical alternative: apply targeted updates to correct a fact, insert new knowledge, remove unsafe behavior, personalize the style \emph{while leaving everything else intact}.

\begin{figure}[h]
    \centering
    \includegraphics[width=0.9\linewidth]{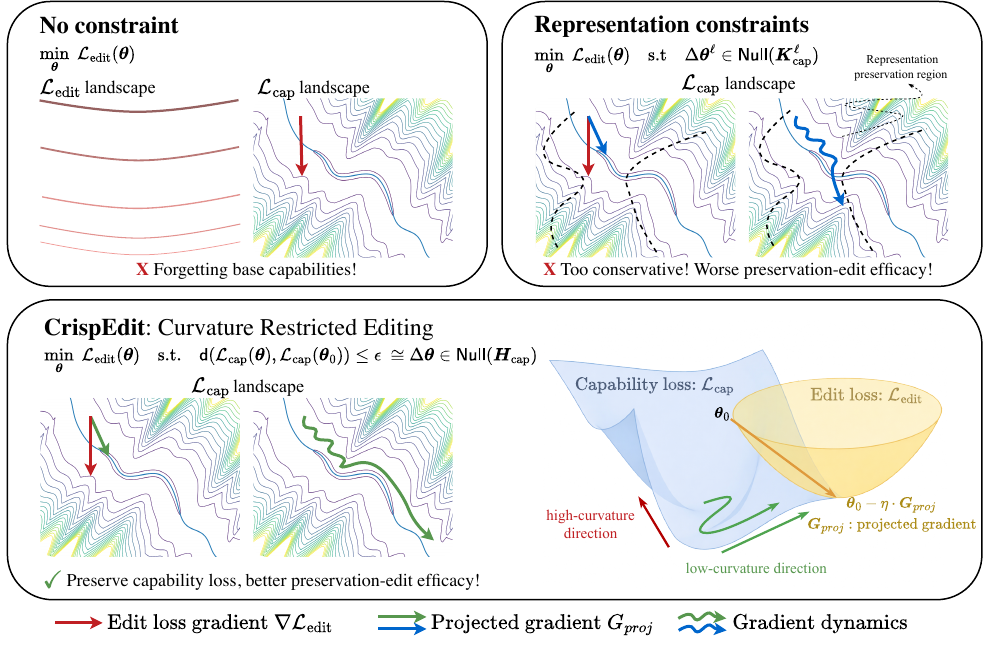}
    \caption{\textbf{Geometric interpretation of \ourmethod compared to baseline editing strategies.} \textit{Top left:} Standard gradient descent effectively minimizes edit loss but moves perpendicular to the capability contours, resulting in high capability loss (degradation). \textit{Top right:} Projecting onto the nullspace of activation covariance is overly conservative; it preserves representations but restricts the update too heavily to successfully optimize the edit loss. \textit{Bottom:} \ourmethod projects the update onto the low-curvature subspace of the capability loss. This allows changes in representations to satisfy the edit while moving along the ``valley'' of the landscape to maintain general model capabilities.}
    \label{fig:main_header}
\end{figure}
In many cases, edits may appear to succeed while quietly degrading broader capabilities reminiscent of reward/proxy hacking \cite{gao2023scaling}. This degradation can manifest as brittle reasoning, weaker instruction-following, or even broken fluency. In response, prior work has introduced heuristic guardrails: restrict updates to a small set of parameters~\citep{hu2022lora, yu2024melo}, localize ``where the knowledge lives,''~\citep{meng2022locating, yang2025finetuning, gu2025ultraedit} or constrain representation changes (e.g., via subject-centric ``knowledge vectors'')~\citep{meng2023massediting, fang2025alphaedit}. Despite improvements, these methods tend to bake in strong assumptions about edit structure (e.g., explicit subjects/entities) and impose constraints in parameter or representation space that are only indirectly tied to capability preservation, resulting in a poor edit–preservation trade-off. Indeed, editors built on such constraints still perform poorly when tested \emph{in the wild} under natural autoregressive generation, despite looking strong under unrealistic teacher-forced evaluation that scaffolds the ground-truth prefix and target length~\citep{yang-etal-2025-mirage}.

In this paper, we adopt a first-principles formulation of model editing: an edit should reduce an edit loss while leaving broader capabilities effectively unchanged. Accordingly, we pose editing as a constrained optimization problem that seeks to minimize the edit loss subject to negligible changes in capability loss, measured on a designated capability set via a distance metric~(\Cref{sec:problem_formulation}). Standard approaches that replace the constraint with a soft penalty typically require nontrivial tuning and can be prohibitively expensive when the capability set is far larger than the edit set. This motivates us to ask: \emph{How to enforce capability preservation directly, without turning editing into full retraining?} 

To address this, we introduce \ourmethod{} ({C}urvature-{R}estricted {I}n-{S}itu {P}arameter {E}diting), a scalable non-destructive editor, built around the following core ideas.

\textbf{1. Preserving capabilities with low-curvature projections.} A core idea behind \ourmethod{} is that not all parameter directions are equally important for preserving a model's capabilities. Recent work shows that the curvature of the pretrained loss landscape can be characterized by the Hessian, which is observed to be highly anisotropic: sharp in a small number of directions and flat in others~\citep{sagun2017empirical, oymak2019generalization, pmlr-v97-ghorbani19b, kalra2026scalable}. \ourmethod exploits this structure by projecting updates into low-curvature subspaces of Hessian, effectively ``hiding'' parameter movement where capabilities are minimally affected~(see \cref{fig:main_header} and \Cref{sec:method_hessian}).

\textbf{2. Avoiding base model convergence requirement with Bregman constraint.} A quadratic approximation based on the standard Hessian---which instantiates our formulation with a Euclidean distance---requires assuming that the base model is trained to (near)-convergence, which is rarely satisfied in practice for modern large networks. We resolve this by measuring capability preservation using \emph{Bregman divergence}. This choice yields a quadratic form expressed exactly in terms of the \emph{Gauss-Newton Hessian} (GNH), even when the base model is not trained to convergence, avoiding stationarity assumptions.

\textbf{3. Representation constraints as a restrictive special case.} Our Bregman-GNH formulation also sheds light on several successful prior heuristics. We prove (see \Cref{prop:alphaedit-special-case}) popular editors such as AlphaEdit~\citep{fang2025alphaedit} and Adam-NSCL~\citep{wang2021training} solve an approximate special case of our framework, but do so within \emph{far more restrictive} and lower-dimensional subspaces, leading to a worse capability preservation-edit tradeoff (\Cref{fig:spider_plot}). 

\textbf{4. Scalable matrix-free low-curvature projectors.} The remaining challenge is scale: how can we efficiently compute and store curvature information for billion-parameter transformers? \ourmethod addresses this with two key ideas:
\begin{enumerate}[label=(\alph*), leftmargin=*, nosep]
    \item The resulting GNH is amenable to accurate approximations via Kronecker-factored approximate curvature~\citep[K-FAC]{martens2015optimizing}, which we leverage to enable efficient computation of the low-curvature projection matrix.
    \item Instead of explicitly constructing a low-curvature projection matrix, we introduce (\Cref{sec:matrix-free projection}) a matrix-free projector that exploits the Kronecker eigen-structure: rotate gradients into a factored eigenbasis, mask high-curvature components, and rotate back. This makes constraint-aware second-order editing feasible and enables precomputing capability curvature statistics once and reusing them across many future edits, amortizing cost and enabling batch and sequential editing.
\end{enumerate}

\noindent\textbf{Experimental results.} We evaluate \ourmethod in both small- and large-scale regimes. In controlled small-scale experiments on image classification (MNIST $\mapsto$ FashionMNIST), where calculating exact curvature is feasible, we show that Hessian low-curvature projections yield the strongest capability preservation, and that K-FAC closely tracks this behavior cheaply. We then scale \ourmethod to edit LLMs (e.g., \llama and \qwen) and evaluate them as used in practice: edits should be \emph{reliable} in standalone autoregressive generations, \emph{generalize} across semantically equivalent in-scope queries, and remain \emph{local}, preserving out-of-scope knowledge and broad skills such as reasoning, instruction-following, and truthfulness. We further test our method in both \textit{batch} editing, where many edits are applied at once, and \textit{sequential} editing, where batches of edits are applied to the model sequentially. Across settings, \ourmethod consistently improves the edit–capability trade-off, achieving strong edit success while substantially reducing capability degradation, with modest compute and storage requirements.\footnote{Using cached curvature, 3000 edits with our method takes six minutes on an NVIDIA A40 GPU.}

\section{The Model editing problem}\label{sec:problem_formulation}
Let $f_{\bm{\theta}}: \cX \mapsto \cY$ denote a model with parameters $\bm{\theta} \in \Theta \subseteq \mathbb{R}^p$, mapping inputs $x\in \cX$ to outputs $y \in \cY$. 
Model editing seeks to update a pretrained (base) model $f_{\bm{\theta}_0}$ with initial parameters $\bm{\theta}_0$, using a provided edit target pair $(x, y) \in \cX \times \cY$, while preserving the existing capabilities of the base model. We formalize this as follows.

Let $\cD_{\text{cap}} = \{(x_i, y_i)\}_{i=1}^{n}$ be a reference dataset that serves as a proxy for capabilities we wish to preserve, an exemplar of the domains on which the model should continue to perform well. We formulate capability preservation through the empirical loss
$$\mathcal{L}_{\text{cap}}(\bm{\theta}; \mathcal{D}_{\text{cap}})
:= \frac{1}{n}\sum_{i=1}^{n} \ell \; \!\big(f_{\bm{\theta}}(x_i),y_i\big),$$
where $\ell(\hat y, y)$ is a task-appropriate loss (e.g., cross entropy).
Preserving capabilities then means keeping $\mathcal{L}(\bm{\theta}; \mathcal{D}_{\text{cap}})$ close to its pre-edit value, i.e., $\mathcal{L}(\bm{\theta}; \mathcal{D}_{\text{cap}}) \approx \mathcal{L}(\bm{\theta}_0; \mathcal{D}_{\text{cap}})$. Let $\cD_{\text{edit}} = \{(x_i, y_i)\}_{i=1}^{T}$ be the edit dataset containing the desired edit pairs. We write \(\mathcal{L}_{\text{edit}}(\bm{\theta};\mathcal{D}_{\text{edit}})\) to denote an edit loss, such as the negative log-likelihood of edit outputs. 
Using the language of constrained optimization, a
natural optimization problem that expresses our desire
to minimize edit loss subject to preserving capabilities is the following:\footnote{We will drop the dependency of the capabilities and edit losses on datasets $\mathcal{D}_{\text{edit}}$ and $\mathcal{D}_{\text{cap}}$ when it is clear from the context.}
\begin{align}\label{eq:batch_constrained_optimization}
\min_{\bm{\theta} \in \Theta} \ \mathcal{L}_{\text{edit}}(\bm{\theta})
 \quad \text{s.t.} \quad \mathsf{d} \left(\mathcal{L}_{\text{cap}}(\bm{\theta}), \mathcal{L}_{\text{cap}}(\bm{\theta}_0) \right)
\le \varepsilon,
\end{align}
where $\mathsf{d}(\cdot,\cdot)$ is a measure of distance, such as the difference between the two loss values or the Bregman divergence, and $\varepsilon$ is a small tolerance value. The above formulation is general, unifying and extending many existing model editing frameworks as we discuss in \Cref{sec:related_work}.

While Problem \eqref{eq:batch_constrained_optimization} rigorously expresses our desired intent for model editing, actually solving \eqref{eq:batch_constrained_optimization}, especially at LLM scale, is challenging due to the hard constraint. We note that we focus on the constrained formulation above in lieu of the standard Lagrangian relaxation to \eqref{eq:batch_constrained_optimization}, namely 
$\min_{\bm{\theta} \in \Theta} \ \mathcal{L}_{\text{edit}}(\bm{\theta}) + \lambda\mathsf{d} \left(\mathcal{L}_{\text{cap}}(\bm{\theta}) , \mathcal{L}_{\text{cap}}(\bm{\theta}_0) \right)$.
This is due to the fact that in typical operating regimes
$n$ (the number of reference pairs) far exceeds $T$ (the number of edits), and the computational overhead of gradient-based optimization on the unconstrained problem can be non-trivial. We avoid this complexity by considering an alternative approach to approximating \eqref{eq:batch_constrained_optimization} based on low-curvature projections.

\section{\ourmethod: Curvature-Restricted In-Situ Parameter Editing}\label{sec:method}

We now present \ourmethod{} for solving Problem \eqref{eq:batch_constrained_optimization}.
The key idea is to edit only along directions that are locally ``safe'' for maintaining capabilities as informed by the constraint. We start in \Cref{sec:method_hessian} with a simple instantiation of \ourmethod{} under the standard capability loss difference and derive a principled curvature-restricted model-editing algorithm. Then, in \Cref{sec:method_bregman}, we leverage \emph{Bregman divergences} to derive a practical editing approach that scales to billion-parameter LLMs.

For what follows, we assume that both the maps $\hat y \mapsto \ell(\hat y, y)$ and $\bm{\theta} \mapsto f_{\bm{\theta}}({x})$ are twice continuously differentiable over their respective domains. This immediately holds for architectures with smooth activation functions such as GeLU/SwiGLU. Furthermore, this assumption can readily be relaxed to functions that are twice differentiable except on a measure zero set, such as architectures with ReLU activations; for simplicity of exposition, we omit these details.

\subsection{Preserving capabilities with low-curvature updates}
\label{sec:method_hessian}

We first consider the distance measure to be the standard distance $\mathsf{d}(a, b) = |a-b|$. Furthermore, in this section, we assume that the base parameters $\bm{\theta}_0$ are a local minima of the capabilities loss $\cL_{\text{cap}}(\bm{\theta})$; we remove this assumption in \Cref{sec:method_bregman} by using a different distance measure. Applying a second-order Taylor expansion to the constraint in \eqref{eq:batch_constrained_optimization} yields $\mathcal{L}_{\text{cap}}(\bm{\theta}) - \mathcal{L}_{\text{cap}}(\bm{\theta}_0) \approx \frac{1}{2} (\bm{\theta} - \bm{\theta}_0)^\top \bm{H}_{\text{cap}} (\bm{\theta} - \bm{\theta}_0)$, where $\bm{H}_{\text{cap}} \coloneqq \nabla^2_\theta \cL_{\text{cap}}(\bm{\theta}_0)$ is the Hessian of the capability loss function evaluated at $\bm{\theta}_0$, and the first term in Taylor expansion is zero because $\nabla_\theta \cL_{\text{cap}}(\bm{\theta}_0) = 0$. Under this setting, \eqref{eq:batch_constrained_optimization} can be approximated by optimizing the following quadratically constrained optimization problem:
\begin{align}\label{eq:hessian_constrained_problem}
    \min_{\bm{\theta} \in \Theta} \; \mathcal{L}_{\text{edit}}(\bm{\theta}) \quad \text{s.t.} \quad (\bm{\theta} - \bm{\theta}_0)^\top \bm{H}_{\text{cap}} (\bm{\theta} - \bm{\theta}_0) \leq \varepsilon.
\end{align}

In the deep learning literature, it is well-understood that
in typical overparameterized settings, the Hessian $\bm{H}_{\text{cap}}$ at the end of training is usually low-rank~\cite{sagun2017empirical,oymak2019generalization,pmlr-v97-ghorbani19b}. Thus, the ellipsoidal constraint in \eqref{eq:hessian_constrained_problem} offers many parameter directions around $\bm{\theta}_0$ of \emph{low-curvature}, where the capability loss $\cL_{\text{cap}}$ remains (approximately) invariant. These low-curvature directions enable the optimization \eqref{eq:hessian_constrained_problem} to decrease the edit loss $\cL_{\text{edit}}$, while limiting loss of capabilities. Furthermore, compared to the Lagrange relaxation objective, the quadratic constraint offers several key advantages:
\begin{enumerate}[label=(\alph*), leftmargin=*, nosep]
    \item \textit{Strict control of capability loss:} The ellipsoidal constraint can be enforced
    via projected gradient or trust-region methods, 
    enabling strict control of tolerated capability degradation; we discuss this shortly.
    \item \textit{Scalability to billion-parameter models:} The second-order relaxation of the constraint {forms the foundation for efficiently scaling} our approach to billion-parameter LLMs leveraging {Bregman divergences} (cf.~\Cref{sec:method_bregman}).
    \item \textit{Pre-computation:} The {curvature model $\bm{H}_{\text{cap}}$ can be precomputed once} and reused across many subsequent edits, amortizing cost and enabling sequential and online interventions (cf.~\Cref{sec:sequential_updates}).
\end{enumerate}

\textbf{Projected low-curvature gradient descent.} We can enforce the constraint in \eqref{eq:hessian_constrained_problem} by ensuring that the weight changes $\Delta \bm{\theta} = \bm{\theta} - \bm{\theta}_0$ are in the (approximate) null-space of the Hessian $\bm{H}_{\text{cap}}$, i.e., $\bm{H}_{\text{cap}} \Delta \bm{\theta} \approx 0$ which is equivalent to $\Delta \bm{\theta} \in \mathsf{Null}(\bm{H}_{\text{cap}})$. A sufficient condition to enforce the constraint during gradient descent is projecting the gradients to the approximate null-space of $\bm{H}_{\text{cap}}$ at every gradient step. 

Let $\bm{H}_{\text{cap}} = \bm{U \Sigma U}^\top$ be the eigen-decomposition of $\bm{H}_{\text{cap}}$, where $\bm{\Sigma} = \mathsf{diag}(\sigma_1, \dots, \sigma_p)$ and $\sigma_1 \geq \dots \geq \sigma_p \geq 0$ (recall that $\bm{\theta}_0$ is locally optimal). We construct a low-curvature projector by discarding the top eigenspace. Concretely, given an energy threshold $\gamma \in (0,1)$, let $k := \min \{r \in [p] \mid \sum_{i=1}^r \sigma_i /  \sum_{i=1}^p \sigma_i \geq \gamma\}$ denote the minimum index capturing
$\gamma$-fraction of the eigenspectrum. Then, the orthogonal projection to the \emph{remaining} directions $\bm{U}_{>k} \coloneqq [u_{k+1} \mid \dots \mid u_p]$ can be computed as:
\begin{align}\label{eq:projected_gradients}
    \bm{g}_t^{\text{proj}} = \bm{P}_\gamma \nabla_{\bm{\theta}} \cL_{\text{edit}}(\bm{\theta}_t), \quad \text{where} \quad  \bm{P}_\gamma := \bm{U}_{>k} \bm{U}_{>k}^\top.
\end{align}

Intuitively, the projection in \eqref{eq:projected_gradients} removes the components of the edit gradient that point in the directions where capability loss is sensitive. We will refer to the subspace spanned by $\bm{U}_{>k}$ as the $\gamma$-approximate nullspace.

\subsection{Gauss-Newton constraint via Bregman divergence}\label{sec:method_bregman}

In \Cref{sec:method_hessian} and deriving \eqref{eq:hessian_constrained_problem}, we assumed that $\nabla_{\bm{\theta}}\cL_{\text{cap}}(\bm{\theta}_0) = 0$. However, in training neural networks, especially LLMs, one typically does not train the network to convergence, to avoid overfitting. Moreover, the capability loss can only be viewed as a mere \textit{proxy} to the pretraining loss. To avoid relying on the linear term vanishing,
we instantiate \ourmethod{} using a \emph{Bregman divergence} that is always first-order flat at $\bm{\theta}_0$.

\begin{definition}[\textbf{Bregman divergence}]\label{eq:bregman_def} For a pair $(x,y)$ and loss $\ell(\cdot, \cdot)$, define the Bregman divergence: 
\begin{align*}%
\begin{split}
\mathsf{d}^{\text{Breg}}_{\ell,y}\!\big(f_{\bm{\theta}}(x), & f_{\bm{\theta}_0}(x)\big)
:=\;
 \ell(f_{\bm{\theta}}(x),y) - \ell(f_{\bm{\theta}_0}(x),y) - \left\langle \nabla \ell(f_{\bm{\theta}_0}(x),y),\, f_{\bm{\theta}}(x)-f_{\bm{\theta}_0}(x)\right\rangle.
\end{split}
\end{align*}
\end{definition}

With this definition, we now consider a distance defined as $\mathsf{d}(\mathcal{L}_{\text{cap}}(\bm{\theta}), \mathcal{L}_{\text{cap}}(\bm{\theta}_0)) := \frac{1}{n} \sum_{i=1}^{n} \mathsf{d}^{\text{Breg}}_{\ell,y_i} (f_{\bm{\theta}}(x_i),  f_{\bm{\theta}_0}(x_i))$. A key property of Bregman divergence is that in the second-order Taylor approximation, the gradient is zero for any fixed $\bm{\theta}$, resulting in the following (cf. Appendix~\ref{appendix:bregman_divergence}):
\begin{align*}
    \mathsf{d}^{\text{Breg}}_\ell(\mathcal{L}_{\text{cap}}(\bm{\theta}), \mathcal{L}_{\text{cap}}(\bm{\theta}_0))
    &\approx \frac{1}{2} (\bm{\theta} - \bm{\theta}_0)^\top \bm{G}_{\text{cap}} (\bm{\theta} - \bm{\theta}_0),
\end{align*}
where $\bm{G}_{\text{cap}}$ is the Gauss-Newton Hessian (GNH, also referred to as the Generalized Gauss-Newton), defined as $\bm{G}_{\text{cap}} := \mathbb{E}_{\cD_{\text{cap}}} \left[\bm{J}^\top \bm{H}_{\hat y} \bm{J} \right]$. Here, $\bm{J} = \nabla_{\bm{\theta}} f_{\bm{\theta}}(x)$ is the network's parameter-output Jacobian, and $\bm{H}_{\hat y} = \nabla^2_{\hat y} \ell$ is the Hessian of the loss with respect to the network's outputs, with the expectation taken empirically over the dataset $\cD_{\text{cap}}$. Importantly, $\bm{G}_{\text{cap}}$ is well-behaved for overparameterized and partially trained networks, and lends itself to reliable and scalable approximations which we explore below.

\textbf{Connections to existing model editing methods.} It turns out many existing heuristic model editing methods can be viewed as solving the problem \eqref{eq:hessian_constrained_problem} via conservative approximations of the quadratic constraint, and with more restrictive assumptions. For example, the popular AlphaEdit technique~\citep{fang2025alphaedit} (and related methods like Adam-NSCL~\citep{wang2021training}) can be viewed as solving the following approximate optimization problem:
\begin{align}
    \label{eq:alphaedit_problem}
    \begin{split}
    &\min_{\bm{\theta}} \; \mathcal{L}_{\text{edit}}(\bm{\theta}) \quad 
    \text{s.t.} \quad \bm{\theta} - \bm{\theta}_0 \in \mathsf{Null}(\bm{K}_{\text{cap}}).
    \end{split}
\end{align}
Here, matrix $\bm{K}_{\text{cap}}$ is constructed from the so-called \textit{knowledge vectors} for \emph{a particular} MLP layer, used for preserving capabilities in certain domains of interest. We show that AlphaEdit solves a special, more restrictive problem compared to our approach; the proof can be found in Appendix~\ref{sec:proofs_of_generalization}. 

\begin{proposition}[\textbf{AlphaEdit is more conservative}]
\label{prop:alphaedit-special-case}
Fix an MLP layer $l$ and consider updating only the weights of layer $l$. Let $\bm{K}^l_{\text{cap}} \coloneqq \bm{I} \otimes [\bm{a}_{l-1}^1, \dots, \bm{a}_{l-1}^n]$ be the layer-input activations on the capability dataset, and $\bm{G}^l_{\text{cap}}$ be the GNH. Then, $\mathsf{Null}( \bm{K}^l_{\text{cap}}) \subseteq \mathsf{Null}(\bm{G}^l_{\text{cap}})$.
\end{proposition}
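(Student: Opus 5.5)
The plan is to factor the layer-$l$ block of the GNH through the layer-input activations, so that any weight perturbation killed by $\bm{K}^l_{\text{cap}}$ is automatically killed by $\bm{G}^l_{\text{cap}}$.

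Write the layer as $\bm{s}_l = \bm{W}_l \bm{a}_{l-1}$. Only $\bm{W}_l$ is updated, with $\bm{\theta}^l = \VecOp(\bm{W}_l)$. Use the standard identity $\VecOp(\Delta\bm{W}\,\bm{a}) = (\bm{a}^\top \otimes \bm{I})\VecOp(\Delta\bm{W})$. Then for each capability example $i$, the Jacobian of the network output with respect to $\bm{\theta}^l$ factors by the chain rule as
\[
\bm{J}_i^l = \frac{\partial f}{\partial \bm{s}_l}\Big|_i \,\big((\bm{a}^i_{l-1})^\top \otimes \bm{I}\big) =: \bm{B}_i \bm{A}_i .
\]
Everything downstream of the pre-activation is collected in $\bm{B}_i$, and $\bm{A}_i$ depends only on the input activation $\bm{a}^i_{l-1}$. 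The GNH block is then
\[
\bm{G}^l_{\text{cap}} = \frac{1}{n}\sum_{i=1}^n \bm{A}_i^\top \bm{B}_i^\top \bm{H}_{\hat y,i}\bm{B}_i \bm{A}_i .
\]

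Take $\Delta\bm{\theta}^l \in \mathsf{Null}(\bm{K}^l_{\text{cap}})$. Up to the vectorization/Kronecker ordering convention used in the statement, this means $\Delta\bm{W}_l\,\bm{a}^i_{l-1} = 0$ for every $i$. Equivalently, $\bm{A}_i\Delta\bm{\theta}^l = \VecOp(\Delta\bm{W}_l \bm{a}^i_{l-1}) = 0$ for all $i$. Plugging this in, each summand of $\bm{G}^l_{\text{cap}}\Delta\bm{\theta}^l$ vanishes, so $\bm{G}^l_{\text{cap}}\Delta\bm{\theta}^l = 0$ and the inclusion follows.

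The argument needs neither positive semidefiniteness of $\bm{H}_{\hat y}$ nor any convergence assumption, since the nullspace vector is annihilated by the rightmost factor. I might also remark that the inclusion is typically strict, because $\bm{B}_i^\top\bm{H}_{\hat y,i}\bm{B}_i$ can itself have a large nullspace.

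The main obstacle is bookkeeping rather than substance. I need to check that the statement's $\bm{K}^l_{\text{cap}} = \bm{I}\otimes[\bm{a}^1,\dots,\bm{a}^n]$ is, under the chosen vec convention (row- versus column-major), exactly the operator $\Delta\bm{W}\mapsto \Delta\bm{W}[\bm{a}^1,\dots,\bm{a}^n]$. That likely requires a transpose, i.e. reading $\mathsf{Null}$ for the appropriate side, or using $\bm{I}\otimes \bm{K}^\top$. I will fix the convention first so that $\bm{K}^l_{\text{cap}}\Delta\bm{\theta}^l = \VecOp(\Delta\bm{W}_l\bm{K})$ holds. I will also handle any bias term by appending a constant $1$ to $\bm{a}_{l-1}$.
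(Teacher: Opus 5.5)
Your proof is correct and follows the paper's argument. The paper likewise reads $\mathsf{Null}(\bm{K}^l_{\text{cap}})$ as $\Delta\bm{W}\bm{a}^i_{l-1}=\bm{0}$ for all $i$, factors the per-example Jacobian as $\bm{J}_i=\bm{J}_f(\bm{W}_0\bm{a}^i_{l-1})\big(\bm{I}\otimes(\bm{a}^i_{l-1})^\top\big)$, and concludes $\bm{J}_i\Delta\bm{w}=\bm{0}$. The only difference is the last step: the paper passes through $\Delta\bm{w}^\top\bm{G}^l_{\text{cap}}\Delta\bm{w}=0$ and invokes $\bm{H}_i\succeq\bm{0}$ and $\bm{G}^l_{\text{cap}}\succeq\bm{0}$, whereas your direct evaluation $\bm{G}^l_{\text{cap}}\Delta\bm{w}=\frac{1}{n}\sum_i\bm{J}_i^\top\bm{H}_{\hat y,i}(\bm{J}_i\Delta\bm{w})=\bm{0}$ is shorter and, as you note, needs no semidefiniteness.
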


Unlike AlphaEdit's representation-level restriction via $\bm{K}_{\text{cap}}$, our method preserves capabilities through loss curvatures via $\bm{G}_{\text{cap}}$. Furthermore, our approach can update multiple layers simultaneously, whereas AlphaEdit edits one layer at a time; consequently, a direct comparison requires matching the edited parameter subset. \Cref{prop:alphaedit-special-case} shows that even if we artificially restrict our method to a single layer $l$, the feasible update subspace defined by the corresponding layerwise GNH is a \emph{superset} of AlphaEdit's layerwise subspace. We emphasize that this constraint of the form can be significantly more restrictive than our approach. In particular, $\mathsf{Null}(\bm{K}_{\text{cap}})$ can be a subspace of \emph{much smaller dimension} than the nullspace of the GNH. Furthermore, in contrast to the knowledge matrix, in practice, the GNH is known to be flat in many directions, e.g., due to network overparameterization \cite{sagun2017empirical,oymak2019generalization}. Therefore, the constraint in AlphaEdit can be significantly more restrictive, leading to a worse tradeoff between preserving prior capabilities and applying the new edits, as evidenced by our comparative analysis in \cref{sec:llm_experiments}.

\begin{greenAIbox}{Result: Representational constraint is a restrictive special case of our formulation}
We prove that heuristic methods like AlphaEdit enforce updates within the nullspace of layer inputs ($\bm{K}_{\text{cap}}$), which is a strict subset of the nullspace of the loss curvature ($\bm{G}_{\text{cap}}$) utilized by our method. Consequently, AlphaEdit solves a significantly more constrained optimization problem, limiting its accessible parameter space and resulting in a worse tradeoff between editing efficacy and capability preservation.
\end{greenAIbox}

\subsection{K-FAC for scalable, matrix-free projections}
\label{sec:matrix-free projection}
The remaining obstacle is scale: $\bm{G}_{\text{cap}}$ is expensive to compute and represent as a matrix. To address this, we approximate $\bm{G}_{\text{cap}}$ with Kronecker-Factored Approximate Curvature (K-FAC)\footnote{While K-FAC approximates the Fisher information matrix, for many models, such as the transformers with softmax output and cross-entropy loss, it is equivalent to the GNH~\citep{martens2020naturalgradient}.}~\citep{martens2015optimizing, george2018fast}. At a high level, K-FAC approximates $\bm{G}_{\text{cap}}$ as a block-diagonal matrix, i.e., $\bm{G}_{\text{cap}} \approx \mathrm{blkdiag}( \bm{G}^{1}_{\text{cap}}, \dots, \bm{G}^{L}_{\text{cap}} )$ for a network with $L$ layers. To describe each block-diagonal approximation, suppose that layer $l$ of an MLP computes its outputs as follows: $\bm{s}_l = \bm{W}_l \bm{a}_{l-1}$ and $\bm{a}_l = \phi_l(\bm{s}_l)$, where $\bm{a}_{l-1} \in \mathbb{R}^{d_{\text{in}}}$ are input activations, $\bm{W}_l \in \mathbb{R}^{d_{\text{out}} \times d_{\text{in}}}$ are layer weights (including any bias terms), and $\bm{s}_l \in \mathbb{R}^{d_{\text{out}}}$ are layer pre-activations. Let $\bm{g}_l = \nabla_{\bm{s}_l} \log p(\hat y \mid x)$ denote the pseudo-gradients of preactivations. Then, the K-FAC approximation of GNH for layer $l$ is given by:
\begin{align}\label{eq:kfac}
    \bm{G}_{\text{cap}}^{l} \approx \mathbb{E}\left[\bm{a}_{l-1}\bm{a}_{l-1}^\top \right] \otimes \mathbb{E}\left[ \bm{g}_l\bm{g}_l^\top\right]  \coloneqq \bm{A}_{l-1} \otimes \bm{S}_l.
\end{align}
Here, $\bm{A}_{l-1}$ and $\bm{S}_l$ are uncentered covariance matrices of the activations and preactivation pseudo-gradients, respectively, with the expectation taken with respect to the capabilities dataset $\mathcal{D}_{\text{cap}}$. This reduces the per-layer storage requirements from $O(d_{\text{in}}^2 d_{\text{out}}^2)$ to $O(d_{\text{in}}^2+d_{\text{out}}^2)$ memory.

\textbf{Matrix-free projections without forming $\bm{P}^{(l)}_\gamma$.}
Even with K-FAC approximations in place, explicitly materializing a projector matrix $\bm{P}^{(l)}_\gamma$ for the $\gamma$-approximate nullspace of $\bm{G}_{\text{cap}}^{(l)}$ is memory-prohibitive. Thus, we now describe an efficient method to project onto the $\gamma$-approximate nullspace that does not require explicitly forming $\bm{P}^{(l)}_\gamma$. The key idea behind our approach is the fact that the eigenvalues/eigenvectors of a Kronecker product $\bm{M} \otimes \bm{N}$ are simply the product of the eigenvalues/eigenvectors of $\bm{M}$ and $\bm{N}$. Specifically, let $\bm{A}_{l-1} = \bm{U}_{\text{in}}\bm{\Lambda}_{\text{in}}\bm{U}_{\text{in}}^\top$ and $\bm{S}_{l-1} = \bm{U}_{\text{out}}\bm{\Lambda}_{\text{out}}\bm{U}_{\text{out}}^\top$ denote the respective eigendecompositions of $\bm{A}_{l-1}$ and $\bm{S}_{l-1}$. We show in Appendix \ref{sec:matrix_projection_proof}, for a weight-gradient $\bm{Q}_l = \nabla_{\bm{W}_l} L_{\text{edit}}(\bm{\theta})$, the projected gradient $\bm{Q}^{\text{proj}}_l = \MatOp(\bm{P}^{(l)}_\gamma \VecOp( \bm{Q}_l ))$ can be written as:
\begin{align}\label{eq:matrix_free_proj}
\bm{Q}^{\text{proj}}_l
\; & =\;
\bm{U}_{\text{out}}\Big(\big(\bm{U}_{\text{out}}^\top \bm{Q}_l \bm{U}_{\text{in}}\big)\odot \bm{M}\Big)\bm{U}_{\text{in}}^\top,
\end{align}
where $\odot$ denotes the Hadamard (entry-wise) matrix product and $\bm{M}_{ij} = \bm{1}\!\left[\lambda^{\text{out}}_i\lambda^{\text{in}}_j \le \lambda_\gamma\right]$ is a binary mask that selects low-curvature components of the Kronecker matrix; $\lambda_\gamma$ denotes the largest eigenvalue associated with the $\gamma$-approximate nullspace of $\bm{P}_\gamma^\ell$. Using this formula, one never needs to form the $d_{\text{in}}d_{\text{out}}\times d_{\text{in}}d_{\text{out}}$ projector, further reducing the storage requirement from $O(d_{\text{in}}^2 d_{\text{out}}^2)$ to $O(d_{\text{in}}^2 + d_{\text{out}}^2 + d_{\text{in}} d_{\text{out}})$. With this projection in hand, we are ready to define \ourmethod, presented in~\cref{alg:crispedit_batch}.
\begin{algorithm}[t]
\caption{\ourmethod}
\label{alg:crispedit_batch}
\begin{algorithmic}[1]
\REQUIRE $\bm{\theta}_0$, $\mathcal{D}_{\text{cap}}$, $\mathcal{D}_{\text{edit}}$, number of epochs $E$.
\ENSURE Edited model parameters $\bm{\theta}$.
\STATE Compute K-FAC factors $(\bm{A}_{l-1}, \bm{S}_l)$ for all finetuned layers $l$ on $\cL(\bm{\theta}; \mathcal{D}_{\text{cap}})$;
cache $\bm{U}_{\text{out}}^{(l)}, \bm{U}_{\text{in}}^{(l)}$, and projection mask $\bm{M}^{(l)}$ for each layer (computed via SVD).
\STATE Initialize parameters $\bm{\theta} \leftarrow \bm{\theta}_0$.
\FOR{$e = 1$ to $E$}
    \FOR{each minibatch $\mathcal{B} \subset \mathcal{D}_{\text{edit}}$}
        \STATE Compute gradient $\bm{Q}_l$ for each fine-tuned layer $l$.
        \STATE Project gradient to $\bm{Q}_l^{\text{proj}}$ (cf.~\cref{eq:matrix_free_proj})
        \STATE Update parameters $\bm{\theta}$ using $\bm{Q}_l^{\text{proj}}$.
    \ENDFOR
\ENDFOR
\end{algorithmic}
\end{algorithm}

\begin{greenAIbox}{Result: K-FAC enables scalable matrix-free curvature projection.}
Directly storing the GNH or its projector is memory-prohibitive. We overcome this by approximating the GNH with K-FAC and deriving a \textit{matrix-free} projection update. By exploiting the Kronecker structure, we project gradients using only the eigendecompositions of the smaller factor matrices $\bm{A}$ and $\bm{S}$, avoiding the materialization of the full high-dimensional projector. This reduces memory complexity from $O(d_{\text{in}}^2 d_{\text{out}}^2)$ to $O(d_{\text{in}}^2 + d_{\text{out}}^2)$, enabling \ourmethod to scale efficiently.
\end{greenAIbox}

\begin{algorithm}[b]
\caption{\ourmethodseq}
\label{alg:online_KFAC_updates}
\begin{algorithmic}[1]
\REQUIRE $\bm{\theta}_0$, $\mathcal{D}_{\text{cap}}$, edits $\mathcal{D}_{\text{edit}}^{(1)}, \dots, \mathcal{D}_{\text{edit}}^{(K)}$.
\ENSURE Edited models $\bm{\theta}_1, \dots, \bm{\theta}_K$ (updated sequentially).

\STATE Compute K-FAC factors $\{\bm{A}^{(l-1)}, \bm{S}^{(l)}\}$ on $\cL(\bm{\theta}; \mathcal{D}_{\text{cap}})$.
\STATE Initialize
$\{\bm{A}^{(l-1)}_{\text{acc}}, \bm{S}^{(l)}_{\text{acc}}\}
\leftarrow
\{\bm{A}^{(l-1)}_{\text{cap}}, \bm{S}^{(l)}_{\text{cap}}\}$.

\FOR{$k = 1$ to $K$}
    \STATE Solve \eqref{eq:batch_constrained_optimization} for $\bm{\theta}_k$ with edit loss $\cL(\bm{\theta}; \mathcal{D}_{\text{edit}}^{(k)})$, using layer-wise $\gamma$-approximate nullspace projections induced by
    $\{\bm{A}^{(l-1)}_{\text{acc}}, \bm{S}^{(l)}_{\text{acc}}\}$ (cf.~\Cref{alg:crispedit_batch}).
    \STATE Compute K-FAC factors $\{\bm{A}^{(l-1)}_{\text{edit},k}, \bm{S}^{(l)}_{\text{edit},k}\}$ for $\mathcal{D}_{\text{edit}}^{(k)}$.
    \STATE Aggregate K-FAC factors $\{\bm{A}^{(l-1)}_{\text{acc}}, \bm{S}^{(l)}_{\text{acc}}\}$ with $\{\bm{A}^{(l-1)}_{\text{edit},k}, \bm{S}^{(l)}_{\text{edit},k}\}$ via streaming averages.
\ENDFOR
\end{algorithmic}
\end{algorithm}

\subsection{Sequential editing via online projection updates}
\label{sec:sequential_updates}
Up to this point, we have described \ourmethod in a \emph{batch editing setting}, where we assume all the edits $\mathcal{D}_{\text{edit}}$ are gathered at once, and the base model is updated to incorporate all the edits. A complementary setting is one of \textit{sequential editing}, where edits (single instances or batches) arrive over time and the model is updated from $f_{\bm{\theta}_0}$ to $f_{\bm{\theta}_1}, \dots, f_{\bm{\theta}_K}$ in $K$ successive rounds. Here, at every round $k$, the goal is to preserve both the base capabilities and the earlier edits in rounds $1$ to $k-1$ applied to the model. This setting is closely connected to continual (or lifelong) learning \cite{de2021continual,shi2025continual} and inherits its core failure mode catastrophic forgetting. Batch editing can be viewed as ``breadth-first'', integrating many edits at once, whereas sequential editing is ``depth-first'', repeatedly revising the model as the new edit data arrive \cite{yang2025finetuning}.

Concretely, consider a sequence of edit data that arrives over time in chunks: $\mathcal{D}_{\text{edit}}^{(1)}, \dots, \mathcal{D}_{\text{edit}}^{(K)}$. A na{\"{i}}ve algorithm at every round $k$ sets $\mathcal{D}_{\text{edit}} = \cup_{i=1}^{k} \mathcal{D}_{\text{edit}}^{(i)}$, and approximately solves problem \eqref{eq:batch_constrained_optimization} using \Cref{alg:crispedit_batch}. However, this na{\"{i}}ve approach must keep all edits around, which can be infeasible and/or impractical for large $K$ or privacy-sensitive settings \cite{yao2023editing}. To address these issues, we develop an algorithm (\Cref{alg:online_KFAC_updates}) which sequentially maintains the requisite statistics to implement $\gamma$-approximate nullspace projection. The key idea behind \Cref{alg:online_KFAC_updates} is that the $\bm{A}_{l-1}$ and $\bm{S}_l$ factors from K-FAC (cf.~\eqref{eq:kfac}) are memory-efficient sufficient statistics to summarize the approximate nullspace of the capability loss and the previous edit losses. By updating these statistics online after each round $k$, we can simultaneously minimize $\cL(\bm{\theta}; \mathcal{D}_{\text{edit}}^{(k)})$ while treating both capabilities and the existing edit losses as hard constraints.

\section{Experiments}
\subsection{Comparison of various second-order constraints}
\label{sec:hessian_approximations}

To understand the effect of various second-order constraints on 
capability preservation in model editing, we consider a simple setting where calculating the Hessian of the model is tractable. Since this is prohibitive for large LLMs, we use LeNet-5~\citep{lecun2002gradient} as a representative model. We pre-train the model to 99\% test accuracy on the MNIST dataset~\citep{lecun1998mnist} and fine-tune it on the Fashion-MNIST dataset~\citep{xiao2017fashion}. In this setting, we treat the MNIST loss as the capabilities objective, and the Fashion-MNIST loss as the edit objective.
\begin{wrapfigure}[24]{r}{0.45\textwidth}
    \centering
    \includegraphics[width=\linewidth]{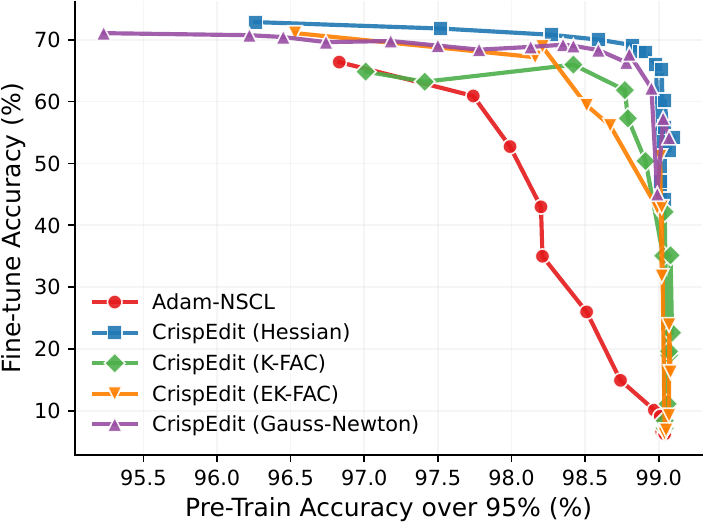}
    \caption{\textbf{Tradeoff between pre-training accuracy (capability preservation) and post-training performance (edit efficacy) for different nullspace projection methods.} We fine-tune a LeNet-5 model pre-trained on MNIST on Fashion-MNIST in the $\gamma$-approximate nullspace of the embeddings (Adam-NSCL) Hessian along with Hessian approximations Gauss-Newton Hessian, K-FAC, and EK-FAC (\ourmethod), over a range of energy thresholds $\gamma$.
    }
    \label{fig:roc_curve}
\end{wrapfigure}

For the fine-tuning phase, we first compute the $\gamma$-approximate nullspace projector of the Hessian of the pre-train loss, applying projected gradient descent (PGD) to fine-tune a one hidden-layer MLP, as described in \Cref{sec:method_hessian}. To address the inaccuracy of the projector caused by parameter drift, we recalculate the $\gamma$-approximate nullspace projector every time parameter changes more than 25\%. To understand the trade-off curve between pre-train and fine-tune test accuracy, we sweep over a range of energy threshold $\gamma = 1-10^{-k}$ with $k\in [\frac{1}{10}, 7]$. We then compare this algorithm against running PGD onto four alternative approximate nullspaces:
(a) activation covariance (cf.~Adam-NSCL~\cite{wang2021training}),
(b) Gauss-Newton Hessian,
(c) K-FAC~\cite{martens2015optimizing}, and
(d) eigenvalue-corrected K-FAC (EK-FAC)~\cite{george2018fast}.

Our results, which illustrate the trade-off between pre-train and fine-tune performance for both the Hessian-based algorithm and the four alternatives (a)-(d), are shown in \cref{fig:roc_curve}. We highlight three findings: (i) Projecting gradient updates onto the $\gamma$-approximate nullspace of the Hessian provides an effective strategy for improving fine-tune accuracy on Fashion-MNIST while maintaining base MNIST performance. (ii) The GNH approach yields a trade-off curve that is quite competitive with the Hessian approach, illustrating the efficacy of the Bregman constraint. This, however, is not the case with the activation covariance used by Adam-NSCL. (iii) Both K-FAC and EK-FAC approximate the performance of the GNH approach reasonably well. The last point (iii) is promising, as it suggests that using K-FAC when we are unable to compute the full Hessian (e.g., LLMs) is a viable approach as we demonstrate next.

\begin{table}[!t]
\caption{\textbf{Comparison of \ourmethod with existing methods on editing \llama.} \textit{Rel} and \textit{Gen} denote reliability and generalization. We edit 3{,}000 samples from three datasets, evaluate edits with WILD, and measure base capability on five benchmarks. Values that are best or within 5\% of best are in bold.}
\centering
\resizebox{\textwidth}{!}{%
\begin{tabular}{llcccccccccc}
\toprule
\multirow{3}{*}{\textbf{Data}} & \multirow{3}{*}{\textbf{Method}} & \multicolumn{4}{c}{\textbf{Edited Capabilities}} & \multicolumn{5}{c}{\textbf{Base Capabilities}} & \multirow{3}{*}{\textbf{Time}} \\
\cmidrule(lr){3-6} \cmidrule(lr){7-11}
 & & \multicolumn{2}{c}{\textbf{QA Context}} & \multicolumn{2}{c}{\textbf{No Context}} & & & & & & \\
\cmidrule(lr){3-4} \cmidrule(lr){5-6}
 & & \textbf{Rel} & \textbf{Gen} & \textbf{Rel} & \textbf{Gen} & \textbf{MMLU} & \textbf{IFEval} & \textbf{TruthfulQA} & \textbf{ARC-C} & \textbf{GSM8K} & \\
\midrule
% --- ZsRE ---
\multirow{13}{*}{\rotatebox[origin=c]{90}{\textbf{ZsRE}}}
 & \llama{} & 2.1 & 1.7 & 2.9 & 2.1 & 69.5 & 69.3 & 50.7 & 58.0 & 73.5 & {  } \\
\cline{2-12}

 & MEMIT & 0.1 & 0.0 & 0.1 & 0.1 & 22.9 & 0.0 & 51.3 & 23.5 & 0.0 & {\small 9h 27m} \\
 & AlphaEdit & 70.1 & 60.6 & 48.1 & 39.4 & 52.7 & 47.7 & 46.3 & 40.5 & 45.5 & {\small 7h 19m} \\
 & Adam-NSCL & 16.6 & 15.5 & 1.9 & 2.0 & \textbf{69.2} & 29.6 & 50.8 & 42.0 & 39.5 & {\small 29m 19s} \\
 & LocBF-FT & 69.5 & 59.7 & 25.2 & 22.1 & \textbf{69.5} & \textbf{70.1} & 51.6 & \textbf{54.0} & \textbf{75.5} & {\small 22m 15s} \\
 & UltraEdit & 20.0 & 16.3 & 22.7 & 17.4 & \textbf{69.3} & \textbf{72.5} & 51.8 & \textbf{54.5} & \textbf{73.0} & {\small 3m 23s} \\
 & MEND & 0.0 & 0.0 & 0.0 & 0.0 & 22.9 & 18.2 & 0.0 & 26.0 & 0.0 & {\small 58m 20s} \\
 & FT & 46.8 & 43.1 & 9.9 & 8.3 & \textbf{69.3} & 45.0 & 48.7 & 43.0 & 50.0 & {\small 4m 32s} \\
 & FT Sequential & 3.6 & 3.5 & 0.9 & 1.2 & \textbf{68.8} & 19.4 & 52.8 & 40.5 & 6.5 & {\small 9m 17s} \\
 & LoRA & 9.1 & 7.4 & 18.7 & 7.2 & \textbf{67.8} & \textbf{70.8} & 52.0 & \textbf{56.0} & 71.0 & {\small 47m 24s} \\
 & LoRA Sequential & 4.4 & 4.0 & 1.3 & 0.9 & \textbf{67.3} & 64.6 & \textbf{56.0} & 47.0 & 67.0 & {\small 3h 12m} \\
\rowcolor{jigsawGreen!20}
 & \ourmethod{} & \textbf{80.5} & \textbf{69.0} & 57.4 & 50.9 & \textbf{69.5} & 67.9 & 50.5 & \textbf{55.0} & \textbf{76.0} & {\small 4m 6s} \\
\rowcolor{jigsawGreen!20}
 & \ourmethodseq{} & 71.1 & 62.9 & \textbf{72.8} & \textbf{60.6} & \textbf{67.8} & \textbf{70.2} & \textbf{53.6} & 52.0 & \textbf{74.0} & {\small 43m 36s} \\
\midrule
% --- CounterFact ---
\multirow{13}{*}{\rotatebox[origin=c]{90}{\textbf{CounterFact}}}
 & \llama{} & 1.2 & 1.0 & 0.3 & 0.6 & 69.5 & 69.3 & 50.7 & 58.0 & 73.5 & {   }\\
\cline{2-12}
 & MEMIT & 0.0 & 0.0 & 0.0 & 0.0 & 24.6 & 18.6 & 49.6 & 21.0 & 0.0 & {\small 7h 30m} \\
 & AlphaEdit & 74.9 & \textbf{57.0} & \textbf{50.5} & \textbf{44.1} & 47.4 & 32.9 & 41.5 & 40.5 & 37.5 & {\small 5h 56m} \\
 & Adam-NSCL & 19.1 & 8.5 & 1.7 & 1.8 & \textbf{68.6} & 22.8 & \textbf{57.1} & 39.5 & 16.5 & {\small 24m 9s} \\
 & LocBF-FT & 61.1 & 41.6 & 10.9 & 13.3 & \textbf{69.4} & 65.0 & 51.3 & \textbf{52.5} & \textbf{74.0} & {\small 14m 40s} \\
 & UltraEdit & 18.1 & 12.4 & 10.2 & 9.3 & \textbf{69.2} & \textbf{68.6} & 49.2 & \textbf{52.0} & \textbf{74.0} & {\small 3m 9s} \\
 & MEND & 0.0 & 0.0 & 0.0 & 0.0 & 22.9 & 18.2 & 0.0 & 26.0 & 0.0 & {\small 17m 42s} \\
 & FT & 12.3 & 6.0 & 1.6 & 2.2 & \textbf{67.4} & 22.7 & 50.4 & 40.0 & 18.0 & {\small 4m 12s} \\
 & FT Sequential & 19.1 & 10.6 & 1.3 & 2.2 & 33.4 & 20.4 & 51.3 & 31.5 & 0.0 & {\small 6m 45s} \\
 & LoRA & 13.2 & 8.3 & 9.5 & 2.7 & \textbf{68.2} & \textbf{68.8} & 53.4 & \textbf{53.0} & 71.0 & {\small 51m 34s} \\
 & LoRA Sequential & 6.5 & 4.8 & 1.6 & 2.0 & \textbf{67.3} & 62.4 & 53.9 & 40.0 & 71.0 & {\small 2h 16m} \\
\rowcolor{jigsawGreen!20}
 & \ourmethod & \textbf{79.4} & \textbf{55.9} & 38.4 & 32.4 & \textbf{69.3} & \textbf{67.5} & 49.5 & \textbf{54.0} & \textbf{76.5} & {\small 3m 17s} \\
\rowcolor{jigsawGreen!20}
 & \ourmethodseq{} & 66.5 & 43.8 & 39.1 & 29.2 & \textbf{67.9} & \textbf{68.5} & \textbf{56.6} & \textbf{54.0} & \textbf{73.0} & {\small 34m 39s} \\
\midrule
% --- WikiBigEdit ---
\multirow{13}{*}{\rotatebox[origin=c]{90}{\textbf{WikiBigEdit}}}
 & \llama{} & 9.3 & 9.1 & 16.4 & 16.1 & 69.5 & 69.3 & 50.7 & 58.0 & 73.5 & {  } \\
\cline{2-12}
 & MEMIT & 0.0 & 0.0 & 0.0 & 0.0 & 24.6 & 13.6 & 52.3 & 23.5 & 0.0 & {\small 10h 42m} \\
 & AlphaEdit & 72.9 & \textbf{66.8} & \textbf{73.9} & \textbf{68.3} & 58.5 & 61.6 & 50.2 & 50.5 & 58.0 & {\small 7h 37m} \\
 & Adam-NSCL & 13.6 & 13.6 & 3.4 & 3.4 & \textbf{69.2} & 45.3 & 50.2 & 42.5 & 39.0 & {\small 30m 45s} \\
 & LocBF-FT & 50.4 & 46.7 & 16.7 & 15.7 & \textbf{69.2} & \textbf{73.2} & 52.0 & \textbf{55.5} & \textbf{73.5} & {\small 15m 47s} \\
 & UltraEdit & 59.2 & 54.8 & 55.4 & 52.0 & \textbf{69.3} & 67.7 & 52.4 & \textbf{53.5} & \textbf{74.5} & {\small 3m 15s} \\
 & MEND & 0.0 & 0.0 & 0.0 & 0.0 & 22.9 & 18.2 & 0.0 & 26.0 & 0.0 & {\small 38m 36s} \\
 & FT & 23.3 & 23.2 & 4.2 & 4.3 & \textbf{69.5} & 49.4 & 49.2 & 42.5 & 59.0 & {\small 5m 12s} \\
 & FT Sequential & 13.4 & 12.6 & 1.8 & 1.5 & \textbf{68.1} & 34.5 & 51.8 & 43.0 & 29.5 & {\small 10m 13s} \\
 & LoRA & 30.0 & 25.8 & 27.0 & 15.7 & \textbf{67.8} & \textbf{70.7} & \textbf{55.4} & 48.0 & \textbf{75.0} & {\small 58m 42s} \\
 & LoRA Sequential & 20.9 & 18.7 & 7.9 & 7.3 & \textbf{67.8} & \textbf{73.8} & \textbf{54.4} & 48.0 & 71.0 & {\small 4h 54m} \\
\rowcolor{jigsawGreen!20}
 & \ourmethod{} & \textbf{77.0} & \textbf{70.2} & 28.4 & 30.5 & \textbf{69.3} & \textbf{70.5} & 51.8 & \textbf{55.0} & \textbf{74.0} & {\small 6m 29s} \\
\rowcolor{jigsawGreen!20}
 & \ourmethodseq{} & 66.7 & 59.8 & 40.8 & 38.6 & \textbf{69.2} & 68.8 & 50.4 & \textbf{53.0} & \textbf{73.0} & {\small 38m 47s} \\
\bottomrule
\end{tabular}
}%  end resizebox
\label{tab:main_table}
\end{table}

\subsection{Large-scale LLM evaluations}\label{sec:llm_experiments}
We now study scaling \ourmethod to billion-parameter LLMs, predominately focusing on \llama. We investigate the following: (i) How well can we edit the model? (ii) Do the edits generalize for different contexts? (iii) To what extent can we preserve the model capabilities? 

\noindent\textbf{Datasets, metrics, and evaluation.}
We edit the base model on 3000 samples of three standard model editing datasets: ZsRE~\cite{levy-etal-2017-zero}, CounterFact~\cite{meng2022locating}, and WikiBigEdit~\cite{thede25awikibigedit}. We report two standard edit metrics~\citep{de2021editing, yang-etal-2025-mirage}: \emph{reliability} (or efficacy) asks whether the edited model produces an acceptable answer to a given edit query, and \emph{generalization} asks whether the effects of an edit extend to semantically related contexts. All three datasets contain rewrite prompts for efficacy evaluation, and paraphrased prompts for generalization evaluation. To measure capability degradation, we benchmark edited and base models on diverse tasks: MMLU~\citep{hendrycks2020measuring}, IFEval~\citep{zhou2023instruction}, TruthfulQA~\citep{lin2022truthfulqa}, ARC-Challenge~\citep{clark2018think}, and GSM8k~\citep{cobbe2021training}.

An edited LM should apply the edits in a \textit{conversational} manner and across different contexts. Yet, due to the computational costs, prior works~\citep{fang2025alphaedit, gu2025ultraedit} typically use likelihood-based, teacher-forced evaluation that leak both content and length of the ground truth, leading to overestimated performance~\citep{yang-etal-2025-mirage}. To better capture realistic editing behavior, we follow the WILD evaluation protocol~\citep{yang-etal-2025-mirage} that combines context-guided autoregressive decoding of LLM responses with LLM-as-a-judge evaluation. We adopt WILD with EasyEdit~\citep{wang2024easyedit}, evaluating prompts both with and without QA context. While we do not anticipate any real-world carry-over, we include teacher-forced evaluations in~\cref{tab:teacher_forced_benchmark} (Appendix) for 
completeness.

\noindent\textbf{Method and baselines.}
We edit the base model with \ourmethod by first computing K-FAC caches on Wikipedia samples for five MLP down-projection layers, and then fine-tuning them with PGD in the $\gamma$-approximate nullspace of caches (cf. \cref{alg:crispedit_batch}). We compare against a range of baselines. % chosen for their merit and relevance. 
MEMIT~\citep{meng2023massediting} and AlphaEdit~\citep{fang2025alphaedit} follow the locate-then-edit paradigm; Adam-NSCL~\citep{wang2021training} performs PGD in the feature covariance nullspace; UltraEdit~\citep{gu2025ultraedit} leverages sensitivity analysis with online statistics; MEND~\citep{mitchell2022fast} uses a hypernetwork to predict parameter changes, FT and LoRA~\citep{hu2022lora, zhang2023adaptive} performs standard and low-rank fine-tuning, respectively; and LocBF-FT~\citep{yang2025finetuning} constrains fine-tuning to a single, hyperparameter-tuned layer. For more details about the evaluation and baselines, see Appendix~\ref{sec:llm_experiment_details}.

\begin{wrapfigure}[19]{r}{0.55\textwidth}
    \centering
    \includegraphics[width=\linewidth]{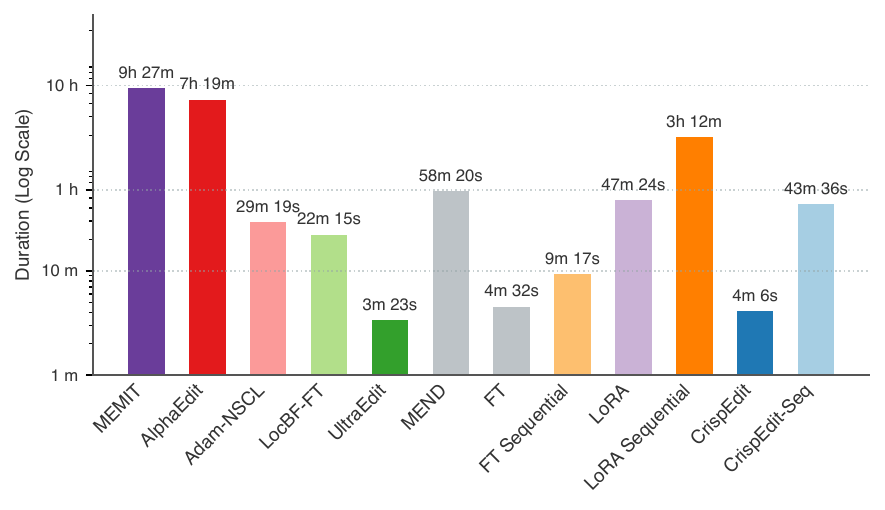}
    \caption{\textbf{Runtime comparison of \ourmethod with other methods.} We apply a number of model editing methods to edit \llama on 3,000 ZsRE samples and measure the wall-clock time for execution.
    }
    \label{fig:duration_bar_plot}
\end{wrapfigure}
\noindent\textbf{Key results.}
We report our key results in \cref{tab:main_table}. Across all datasets, we find two consistent patterns. First, aggressive editing approaches---including MEMIT, MEND, FT, and Adam-NSCL---exhibit substantial degradation. While these methods perform well under teacher-forced evaluation (cf.~\cref{tab:teacher_forced_benchmark}, Appendix), the degraded base capabilities adversely affect their editing performance under autoregressive decoding (cf. Appendix~\ref{sec:qualitative_results}). Second, conservative editing strategies, which restrict updates to limited parameter subspaces, better preserve base capabilities but lead to a suboptimal edited capabilities. AlphaEdit remains a strong baseline of this class, yet it degrades the model's base capabilities due to its limited nullspace estimate, in addition to needing additional subject-centric representations. In comparison, \textbf{\ourmethod consistently tops editing performance while preserving the base capabilities nearly intact.} Furthermore, it remains computationally efficient (cf.~\Cref{fig:duration_bar_plot}), as it only augments standard fine-tuning with PGD. 

\begin{greenAIbox}{Result: \methodName\xspace achieves superior editing performance while preserving base model capabilities.}
Prior editing methods often trade capability preservation for edit quality. Approaches like FT and Adam-NSCL can lead to substantial degradation under autoregressive decoding, while conservative methods such as AlphaEdit require pushing the energy threshold so low that the resulting nullspace becomes a loose approximation, thus improving edits at the cost of base capabilities. \ourmethod consistently yields a better trade-off and achieves high edit performance with nearly intact base capabilities, all the while maintaining computational efficiency via projected gradient descent fine-tuning.
\end{greenAIbox}

\noindent\textbf{Ablations.} We now discuss key findings from ablation experiments; results are provided in Appendix~\ref{sec:additional_tables}.

\noindent\textit{(i) Robustness to energy threshold $\gamma$.} 
We vary the threshold $\gamma$ from 0.5 to 0.99. \cref{tab:ablation_gamma} shows that {\ourmethod's base capability preservation is reasonably robust to the threshold}, even with $\gamma$ as small as $0.5$. 

\noindent\textit{(ii) Sensitivity to the size of capability dataset $n$.} We vary $n = | \mathcal{D}_{\text{cap}}|$ from 10 to 100,000. Surprisingly, as \cref{tab:ablation_n} shows, \ourmethod stays robust across a range of dataset sizes, maintaining strong base capability even with as few as 100 samples. This suggests {\ourmethod requires only a small cache to be effective}. This raises a question: are capability dataset needed \textit{at all?} To validate the importance of capability dataset, we run standard finetuning with no projections (i.e., $n=0$). As~\cref{fig:plot_ablation_n} shows, while \ourmethod is robust to $n$, lack of projection yields a detrimental effect on capability preservation. {Furthermore, capability preservation can improve edit performance in autoregressive evaluation through maintaining fluency and reliable instruction-following during generation.}
\begin{figure}[t]
    \centering
    \includegraphics[width=0.8\linewidth]{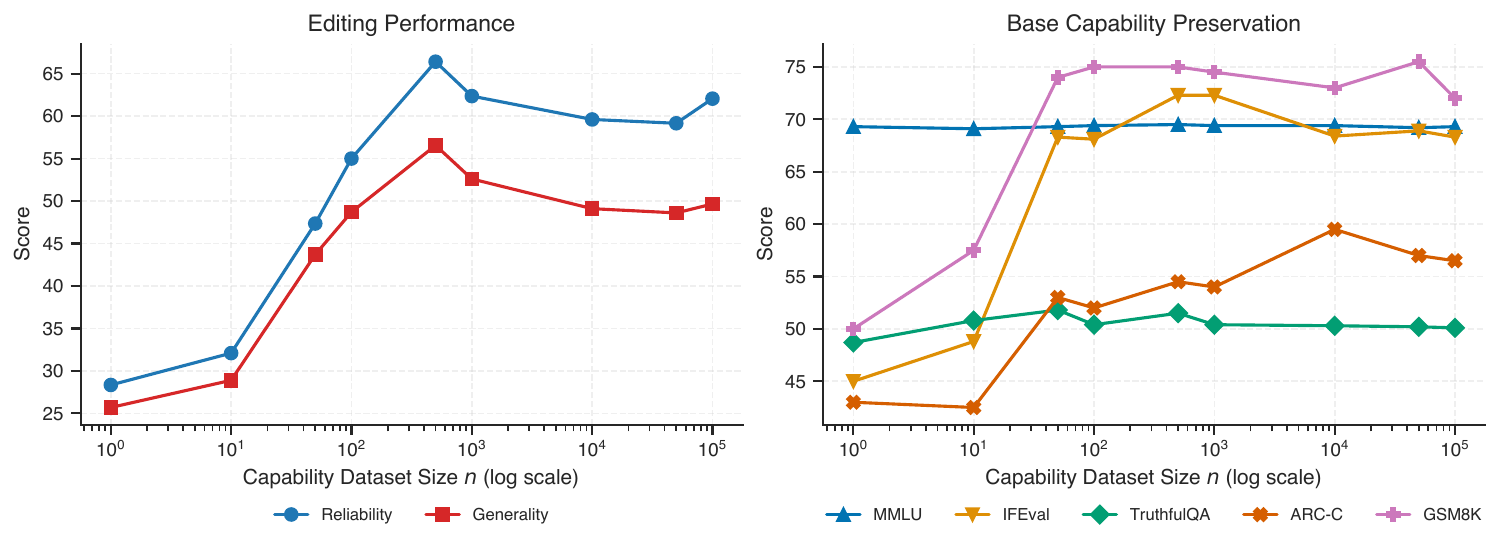}
    \caption{
    \textbf{Effect of capability dataset size $n$ on editing performance and base capability preservation.} We edit \llama on 3,000 ZsRE samples using \ourmethod for a range of $n$ and measure the editing performance and base capability preservation.}
    \label{fig:plot_ablation_n}
\end{figure}

\noindent\textit{(iii) Scaling to larger datasets.} We increase the size of the the edit dataset, using up to 10,000 ZsRE samples. As \cref{tab:benchmark_10k} shows, {\ourmethod scales robustly from 3,000 to 10,000 edits}. In contrast, the baselines degrade or plateau at larger scales due to sequential editing, restrictive layer choices, or limited adaptation capacity. Notably, while LocBF-FT performs competitively at 3k edits, its performance drops significantly at 10k edits. This degradation stems from its restriction to a single layer, which lacks the representational capacity required to manage larger-scale knowledge updates.

\noindent\textit{(iv) Sensitivity to model families.} 
We use \ourmethod to apply 3,000 ZsRE edits to \qwen, and compare it against strong baselines. As \cref{tab:benchmark_qwen} shows, our method retains its advantages, achieving strong editing performances while retaining base capabilities.

\begin{AIbox}{Takeaway: \methodName\xspace is robust to hyperparameter choices and scales to large-scale editing.}
Our ablations demonstrate that \ourmethod is highly resilient to variations in the energy threshold $\gamma$ and remains effective with a minimal capability cache (as few as 100 samples), though the projection mechanism itself remains essential. Unlike baselines that face capacity bottlenecks at scale (e.g., LocBF-FT), \ourmethod maintains performance up to 10,000 edits and generalizes effectively across different model architectures like \qwen.
\end{AIbox}
\begin{figure}[b]
    \centering
    % Left Figure
    \begin{minipage}{0.66\textwidth}
        \centering
        \includegraphics[width=\textwidth]{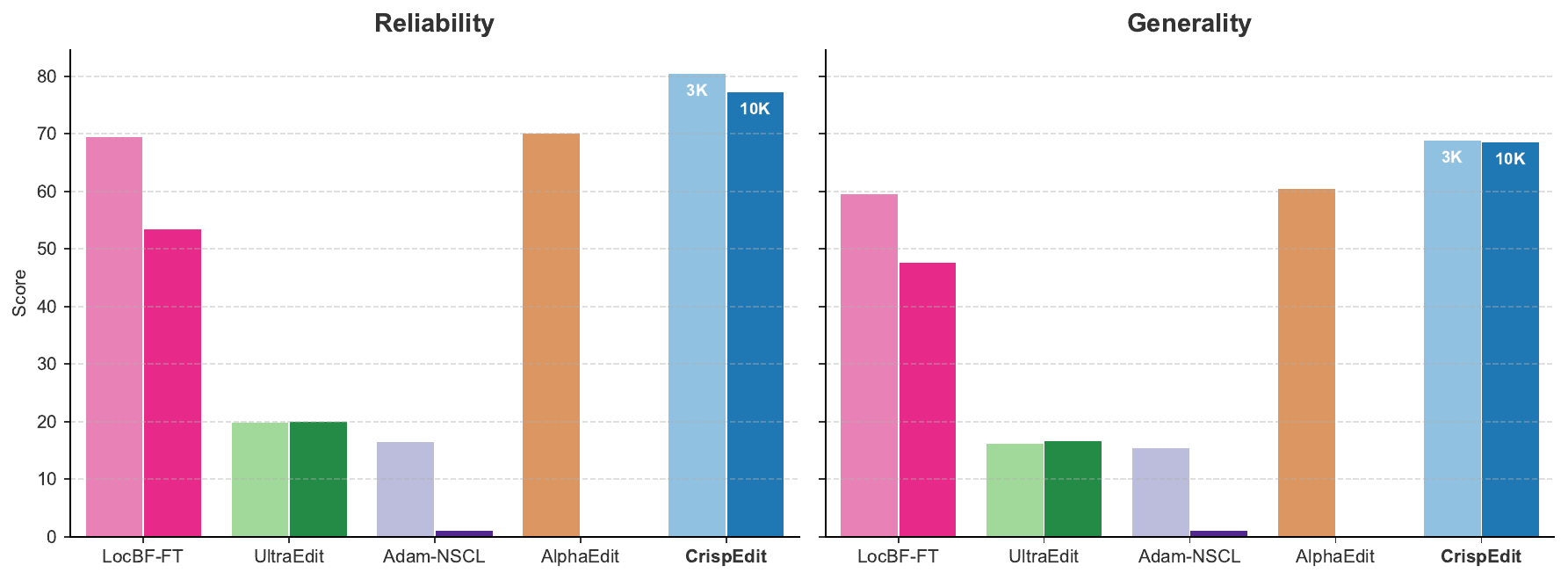}
        \caption{\textbf{Consequence of scaling the number of edits up to 10,000.} We edit \llama on 3,000 and 10,000 ZsRE samples using several model editing methods and measure their reliability and generality with QA context. Here, darker hue corresponds to larger editing samples.}
        \label{fig:ds_size_ablation_plot}
    \end{minipage}
    \hfill
    % Right Figure
    \begin{minipage}{0.3\textwidth}
        \centering
        \includegraphics[width=\textwidth]{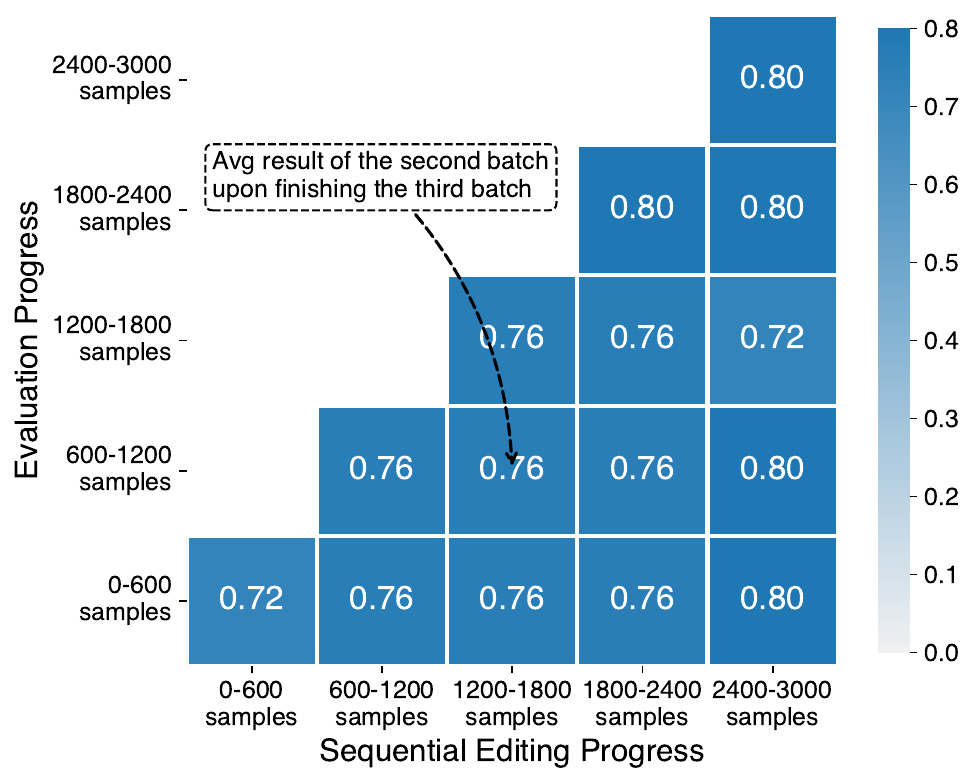}
        \caption{\textbf{Evolution of \ourmethodseq performance.} \ourmethodseq shows stronger editing performance whilst retaining previous edits.}
        \label{fig:chunk_eval}
    \end{minipage}
\end{figure}
% \newpage
% \input{figures/chunk_eval}
\noindent\textbf{Sequential editing with \ourmethodseq{}.}
\cref{tab:main_table} shows that \ourmethodseq{} matches the strength of \ourmethod in sequential editing. \ourmethodseq{} also reasonably matches the sequential editing performance of AlphaEdit (the strongest competitor), while retaining base capabilities nearly intact and operating $8\times$ faster. \Cref{fig:chunk_eval} shows that \ourmethodseq{} retains previously edited knowledge despite being a \textit{depth-first} fine-tuning method, challenging previous assumptions that depth-first methods are ill-suited for sequential model editing~\citep{yang2025finetuning}.

\section{Related work}
\label{sec:related_work}

\textbf{Memory-based approaches} employ additional memory components to store edits outside its parameters. These components can be in the form of axillary models \citep{dong2022calibrating, mitchell2022memory, hartvigsen2023aging, wang2024wise}, in-context learning \citep[WISE]{wang2024wise}, low-rank adapters \citep[MELO]{yu2024melo}, or retrieval-based alignment \citep[LTE]{jiang2024learning}. Compared to these methods, \ourmethod does not assume any data, memory, or architectural augmentations for inference. 

\textbf{Locate-then-edit based approaches} aim to locate a set of parameters responsible for a undesired behavior and edit them. They rely on the assumption that feed-forward networks contain the knowledge in models~\citep{geva2021transformer, geva2022transformer, dai2022knowledge} and precisely edit the neurons responsible for particular information. They often assume structures in the dataset such as subject or entity~\citep{meng2022locating, meng2023massediting, gupta2024unified, fang2025alphaedit, pan2025precise} and relations~\citep{dai2022knowledge}. An exception to these is \citet[UltraEdit]{gu2025ultraedit}, which uses representations of the last token for its localization calculation. In contrast, \ourmethod does not assume any edit structure and does not require locating specific parameters.

\textbf{Hypernet-based approaches} treat predicting parameters shifts as a meta-learning problem and learns a separate network to solve the problem. These methods take the underlying optimization problem of locate-then-edit methods and uses an hypernetwork to predict the parameter shifts, such as \citet[MEND]{mitchell2022fast} solving the optimization speed of \citet[ROME]{meng2022locating} and \citet[MALMEN]{tan2024massive} solving the least squares problem of \citet[MEMIT]{meng2023massediting}. Recently, \citet[RLEdit]{li2025reinforced} treats the dual optimization problem of model stability and edit quality by treating the hypernetwork as a reinforcement learning (RL) agent. Compared to these methods, \ourmethod has no additional network for predicting parameters shifts.

\textbf{Constrained fine-tuning approaches} perform GD-based finetuning with additional constraints such as weight decay~\citep[FT-L]{rawat2021modifying}, null-space projection~\citep[Adam-NSCL]{wang2021training}, prompt-masking~\citep[FT-M]{zhang2024comprehensive}, low-rank update~\citep[MELO]{yu2024melo} or strict layer choice~\citep[LocBF-FT]{yang2025finetuning}. \ourmethod builds on this line by combining FT-M with PGD, deriving the projection from a  constrained-optimization view of capability preservation leveraging the loss curvature. In this way, \ourmethod aims to reduce the amount of manual strictness (e.g., highly restrictive layer choices or aggressive update limitations) sometimes required for constrained fine-tuning baselines, while retaining the simplicity and scalability of standard fine-tuning. Closest to our method is Adam-NSCL, which applies PGD in the null space of activation covariances. We show that Adam-NSCL is a special, more conservative case (\Cref{prop:alphaedit-special-case}) and \ourmethod empirically outperforms it.

\textbf{Continual learning} (CL) is closely related to model editing that studies sequential updates while mitigating catastrophic forgetting. Existing methods broadly fall into three categories: \textit{regularization-based methods} aim to preserve relevant parameters~\citep{zenke2017continual}, \textit{replay-based methods} aim to efficiently replay past memories during training~\citep{shin2017continual, rebuffi2017icarl}, and \textit{architecture-based methods} adjust model architecture on the fly~\citep{rusu2016progressive}. Relevant to our work are \textit{curvature aware} methods, most notably elastic weight consolidation~\citep[EWC]{kirkpatrick2017overcoming}, which estimates old task curvature with the Fisher and adds it as a penalty alongside standard loss to minimize  curvature change. Relatedly, \citet[HALRP]{li2024hessian} performs automatic rank selection with Hessian information of the loss w.r.t base weights and low rank perturbation on the weights to obtain task weights. Recently, \cite{gupta2024unified} unify different CL methods under a single Bregman divergence-based objective. In comparison, \ourmethod avoids per-step auxiliary loss calculation and scales to LLM editing. 

\section{Conclusion and future work}

We formulate model editing as a quadratically constrained optimization problem, introducing \ourmethod{} and its sequential variant as scalable approaches for editing billion-parameter LLMs while preserving capabilities. Our method leverages Gauss–Newton Hessian eigenspaces, induced by a Bregman divergence constraint, to identify low-curvature directions where the capabilities loss is nearly invariant. We use K-FAC to design efficient projection onto these nullspaces, making \ourmethod{} practical at LLM scale.

Our work opens up several exciting avenues for future work. The first direction is exploring the use of \ourmethod{} in other applications, such as safety (e.g., editing out harmful generation and/or hallucinations) and personalization (e.g., changing response style to suit user preferences). Another interesting direction is to utilize \ourmethod{} for learning interpretable models, e.g., training models to minimize some notion of model complexity such as weight sparsity, feature disentanglement, etc., subject to maintaining model capabilities. Finally, on the algorithmic side, alternative techniques for non-linear constrained optimization, such as trust-region and sequential quadratic programming methods, could enable \ourmethod{} to take larger, more aggressive fine-tuning steps leading to further improvements on edit capabilities while preserving base capabilities.

\bibliographystyle{assets/plainnat}
\bibliography{paper}

@article{george2018fast,
  title={Fast approximate natural gradient descent in a {K}ronecker factored eigenbasis},
  author={George, Thomas and Laurent, C{\'e}sar and Bouthillier, Xavier and Ballas, Nicolas and Vincent, Pascal},
  journal={Advances in neural information processing systems},
  volume={31},
  year={2018}
}

@inproceedings{martens2015optimizing,
  title={Optimizing neural networks with {K}ronecker-factored approximate curvature},
  author={Martens, James and Grosse, Roger},
  booktitle={International conference on machine learning},
  pages={2408--2417},
  year={2015},
  organization={PMLR}
}

@inproceedings{dong2022calibrating,
  title={Calibrating Factual Knowledge in Pretrained Language Models},
  author={Dong, Qingxiu and Dai, Damai and Song, Yifan and Xu, Jingjing and Sui, Zhifang and Li, Lei},
  booktitle={Findings of the Association for Computational Linguistics: EMNLP 2022},
  pages={5937--5947},
  year={2022}
}

@inproceedings{mitchell2022memory,
  title={Memory-based model editing at scale},
  author={Mitchell, Eric and Lin, Charles and Bosselut, Antoine and Manning, Christopher D and Finn, Chelsea},
  booktitle={International Conference on Machine Learning},
  pages={15817--15831},
  year={2022},
  organization={PMLR}
}

@article{hartvigsen2023aging,
  title={Aging with grace: {L}ifelong model editing with discrete key-value adaptors},
  author={Hartvigsen, Tom and Sankaranarayanan, Swami and Palangi, Hamid and Kim, Yoon and Ghassemi, Marzyeh},
  journal={Advances in Neural Information Processing Systems},
  volume={36},
  pages={47934--47959},
  year={2023}
}

@inproceedings{yu2024melo,
  title={{MELO: E}nhancing model editing with neuron-indexed dynamic {LoRA}},
  author={Yu, Lang and Chen, Qin and Zhou, Jie and He, Liang},
  booktitle={Proceedings of the AAAI Conference on Artificial Intelligence},
  volume={38},
  pages={19449--19457},
  year={2024}
}

@article{wang2024wise,
  title={{WISE}: {R}ethinking the knowledge memory for lifelong model editing of large language models},
  author={Wang, Peng and Li, Zexi and Zhang, Ningyu and Xu, Ziwen and Yao, Yunzhi and Jiang, Yong and Xie, Pengjun and Huang, Fei and Chen, Huajun},
  journal={Advances in Neural Information Processing Systems},
  volume={37},
  pages={53764--53797},
  year={2024}
}

@inproceedings{jiang2024learning,
  title={Learning to Edit: {A}ligning {LLM}s with Knowledge Editing},
  author={Jiang, Yuxin and Wang, Yufei and Wu, Chuhan and Zhong, Wanjun and Zeng, Xingshan and Gao, Jiahui and Li, Liangyou and Jiang, Xin and Shang, Lifeng and Tang, Ruiming and others},
  booktitle={Proceedings of the 62nd Annual Meeting of the Association for Computational Linguistics (Volume 1: Long Papers)},
  pages={4689--4705},
  year={2024}
}

@article{meng2022locating,
  title={Locating and editing factual associations in {GPT}},
  author={Meng, Kevin and Bau, David and Andonian, Alex and Belinkov, Yonatan},
  journal={Advances in neural information processing systems},
  volume={35},
  pages={17359--17372},
  year={2022}
}

@inproceedings{dai2022knowledge,
  title={Knowledge neurons in pretrained transformers},
  author={Dai, Damai and Dong, Li and Hao, Yaru and Sui, Zhifang and Chang, Baobao and Wei, Furu},
  booktitle={Proceedings of the 60th Annual Meeting of the Association for Computational Linguistics (Volume 1: Long Papers)},
  pages={8493--8502},
  year={2022}
}

@inproceedings{
meng2023massediting,
title={Mass-Editing Memory in a Transformer},
author={Kevin Meng and Arnab Sen Sharma and Alex J Andonian and Yonatan Belinkov and David Bau},
booktitle={The Eleventh International Conference on Learning Representations },
year={2023},
url={https://openreview.net/forum?id=MkbcAHIYgyS}
}

@inproceedings{gupta2024unified,
  title={A Unified Framework for Model Editing},
  author={Gupta, Akshat and Sajnani, Dev and Anumanchipalli, Gopala},
  booktitle={Findings of the Association for Computational Linguistics: EMNLP 2024},
  pages={15403--15418},
  year={2024}
}

@inproceedings{
fang2025alphaedit,
title={AlphaEdit: {N}ull-Space Constrained Model Editing for Language Models},
author={Junfeng Fang and Houcheng Jiang and Kun Wang and Yunshan Ma and Jie Shi and Xiang Wang and Xiangnan He and Tat-Seng Chua},
booktitle={The Thirteenth International Conference on Learning Representations},
year={2025},
url={https://openreview.net/forum?id=HvSytvg3Jh}
}

@inproceedings{
pan2025precise,
title={Precise Localization of Memories: {A} Fine-grained Neuron-level Knowledge Editing Technique for {LLM}s},
author={Haowen Pan and Xiaozhi Wang and Yixin Cao and Zenglin Shi and Xun Yang and Juanzi Li and Meng Wang},
booktitle={The Thirteenth International Conference on Learning Representations},
year={2025},
url={https://openreview.net/forum?id=5xP1HDvpXI}
}

@article{gu2025ultraedit,
  title={UltraEdit: Training-, Subject-, and Memory-Free Lifelong Editing in Large Language Models},
  author={Gu, Xiaojie and Chen, Guangxu and Li, Jungang and Gu, Jia-Chen and Hu, Xuming and Zhang, Kai},
  journal={arXiv preprint arXiv:2505.14679},
  year={2025}
}

@inproceedings{geva2021transformer,
  title={Transformer feed-forward layers are key-value memories},
  author={Geva, Mor and Schuster, Roei and Berant, Jonathan and Levy, Omer},
  booktitle={Proceedings of the 2021 Conference on Empirical Methods in Natural Language Processing},
  pages={5484--5495},
  year={2021}
}

@inproceedings{geva2022transformer,
  title={Transformer feed-forward layers build predictions by promoting concepts in the vocabulary space},
  author={Geva, Mor and Caciularu, Avi and Wang, Kevin and Goldberg, Yoav},
  booktitle={Proceedings of the 2022 conference on empirical methods in natural language processing},
  pages={30--45},
  year={2022}
}

@inproceedings{
mitchell2022fast,
title={Fast Model Editing at Scale},
author={Eric Mitchell and Charles Lin and Antoine Bosselut and Chelsea Finn and Christopher D Manning},
booktitle={International Conference on Learning Representations},
year={2022},
url={https://openreview.net/forum?id=0DcZxeWfOPt}
}

@inproceedings{
tan2024massive,
title={Massive Editing for Large Language Models via Meta Learning},
author={Chenmien Tan and Ge Zhang and Jie Fu},
booktitle={The Twelfth International Conference on Learning Representations},
year={2024},
url={https://openreview.net/forum?id=L6L1CJQ2PE}
}

@inproceedings{
li2025reinforced,
title={Reinforced Lifelong Editing for Language Models},
author={Zherui Li and Houcheng Jiang and Hao Chen and Baolong Bi and Zhenhong Zhou and Fei Sun and Junfeng Fang and Xiang Wang},
booktitle={Forty-second International Conference on Machine Learning},
year={2025},
url={https://openreview.net/forum?id=1jUXprrfcb}
}

@inproceedings{rawat2021modifying,
  title={Modifying memories in transformer models},
  author={Rawat, Ankit Singh and Zhu, Chen and Li, Daliang and Yu, Felix and Zaheer, Manzil and Kumar, Sanjiv and Bhojanapalli, Srinadh},
  booktitle={International conference on machine learning (ICML)},
  volume={2020},
  year={2021}
}

@inproceedings{wang2021training,
  title={Training networks in null space of feature covariance for continual learning},
  author={Wang, Shipeng and Li, Xiaorong and Sun, Jian and Xu, Zongben},
  booktitle={Proceedings of the IEEE/CVF conference on Computer Vision and Pattern Recognition},
  pages={184--193},
  year={2021}
}

@article{zhang2024comprehensive,
  title={A comprehensive study of knowledge editing for large language models},
  author={Zhang, Ningyu and Yao, Yunzhi and Tian, Bozhong and Wang, Peng and Deng, Shumin and Wang, Mengru and Xi, Zekun and Mao, Shengyu and Zhang, Jintian and Ni, Yuansheng and others},
  journal={arXiv preprint arXiv:2401.01286},
  year={2024}
}

@inproceedings{
yang2025finetuning,
title={Fine-tuning Done Right in Model Editing},
author={Wanli Yang and Fei Sun and Rui Tang and Hongyu Zang and Du Su and Qi Cao and Jingang Wang and Huawei Shen and Xueqi Cheng},
booktitle={Socially Responsible and Trustworthy Foundation Models at NeurIPS 2025},
year={2025},
url={https://openreview.net/forum?id=lJFPwtobkG}
}

@inproceedings{
yao2023editing,
title={Editing Large Language Models: {P}roblems, Methods, and Opportunities},
author={Yunzhi Yao and Peng Wang and Bozhong Tian and Siyuan Cheng and Zhoubo Li and Shumin Deng and Huajun Chen and Ningyu Zhang},
booktitle={The 2023 Conference on Empirical Methods in Natural Language Processing},
year={2023},
url={https://openreview.net/forum?id=NZZB3UGcd8}
}

@article{wang2024knowledge,
  title={Knowledge editing for large language models: {A} survey},
  author={Wang, Song and Zhu, Yaochen and Liu, Haochen and Zheng, Zaiyi and Chen, Chen and Li, Jundong},
  journal={ACM Computing Surveys},
  volume={57},
  number={3},
  pages={1--37},
  year={2024},
  publisher={ACM New York, NY}
}

@article{lecun2002gradient,
  title={Gradient-based learning applied to document recognition},
  author={LeCun, Yann and Bottou, L{\'e}on and Bengio, Yoshua and Haffner, Patrick},
  journal={Proceedings of the IEEE},
  volume={86},
  number={11},
  pages={2278--2324},
  year={2002},
  publisher={Ieee}
}

@article{lecun1998mnist,
  title={The {MNIST} database of handwritten digits},
  author={LeCun, Yann},
  journal={http://yann. lecun. com/exdb/mnist/},
  year={1998}
}

@article{xiao2017fashion,
  title={Fashion-mnist: {A} novel image dataset for benchmarking machine learning algorithms},
  author={Xiao, Han and Rasul, Kashif and Vollgraf, Roland},
  journal={arXiv preprint arXiv:1708.07747},
  year={2017}
}

@inproceedings{sinitsineditable,
  title={Editable Neural Networks},
  author={Sinitsin, Anton and Plokhotnyuk, Vsevolod and Pyrkin, Dmitry and Popov, Sergei and Babenko, Artem},
  booktitle={International Conference on Learning Representations},
  year={2020}
}

@inproceedings{de2021editing,
  title={Editing Factual Knowledge in Language Models},
  author={De Cao, Nicola and Aziz, Wilker and Titov, Ivan},
  booktitle={Proceedings of the 2021 Conference on Empirical Methods in Natural Language Processing},
  year={2021}
}

@article{de2021continual,
  title={A continual learning survey: {D}efying forgetting in classification tasks},
  author={De Lange, Matthias and Aljundi, Rahaf and Masana, Marc and Parisot, Sarah and Jia, Xu and Leonardis, Ale{\v{s}} and Slabaugh, Gregory and Tuytelaars, Tinne},
  journal={IEEE transactions on pattern analysis and machine intelligence},
  volume={44},
  number={7},
  pages={3366--3385},
  year={2021},
  publisher={IEEE}
}

@article{shi2025continual,
  title={Continual learning of large language models: {A} comprehensive survey},
  author={Shi, Haizhou and Xu, Zihao and Wang, Hengyi and Qin, Weiyi and Wang, Wenyuan and Wang, Yibin and Wang, Zifeng and Ebrahimi, Sayna and Wang, Hao},
  journal={ACM Computing Surveys},
  volume={58},
  number={5},
  pages={1--42},
  year={2025},
  publisher={ACM New York, NY}
}

@article{oymak2019generalization,
  title={Generalization guarantees for neural networks via harnessing the low-rank structure of the {J}acobian},
  author={Oymak, Samet and Fabian, Zalan and Li, Mingchen and Soltanolkotabi, Mahdi},
  journal={arXiv preprint arXiv:1906.05392},
  year={2019}
}

@InProceedings{pmlr-v97-ghorbani19b,
  title = 	 {An Investigation into Neural Net Optimization via {H}essian Eigenvalue Density},
  author =       {Ghorbani, Behrooz and Krishnan, Shankar and Xiao, Ying},
  booktitle = 	 {Proceedings of the 36th International Conference on Machine Learning},
  pages = 	 {2232--2241},
  year = 	 {2019},
  editor = 	 {Chaudhuri, Kamalika and Salakhutdinov, Ruslan},
  volume = 	 {97},
  series = 	 {Proceedings of Machine Learning Research},
  month = 	 {09--15 Jun},
  publisher =    {PMLR},
}

@article{sagun2017empirical,
  title={Empirical analysis of the {H}essian of over-parametrized neural networks},
  author={Sagun, Levent and Evci, Utku and Guney, V Ugur and Dauphin, Yann and Bottou, Leon},
  journal={arXiv preprint arXiv:1706.04454},
  year={2017}
}

@inproceedings{yang-etal-2025-mirage,
    title = "The Mirage of Model Editing: {R}evisiting Evaluation in the Wild",
    author = "Yang, Wanli  and
      Sun, Fei  and
      Tan, Jiajun  and
      Ma, Xinyu  and
      Cao, Qi  and
      Yin, Dawei  and
      Shen, Huawei  and
      Cheng, Xueqi",
    editor = "Che, Wanxiang  and
      Nabende, Joyce  and
      Shutova, Ekaterina  and
      Pilehvar, Mohammad Taher",
    booktitle = "Proceedings of the 63rd Annual Meeting of the Association for Computational Linguistics (Volume 1: Long Papers)",
    month = jul,
    year = "2025",
    address = "Vienna, Austria",
    publisher = "Association for Computational Linguistics",
    url = "https://aclanthology.org/2025.acl-long.745/",
    doi = "10.18653/v1/2025.acl-long.745",
    pages = "15336--15354",
    ISBN = "979-8-89176-251-0"
}

@inproceedings{wang2024easyedit,
  title={Easyedit: {A}n easy-to-use knowledge editing framework for large language models},
  author={Wang, Peng and Zhang, Ningyu and Tian, Bozhong and Xi, Zekun and Yao, Yunzhi and Xu, Ziwen and Wang, Mengru and Mao, Shengyu and Wang, Xiaohan and Cheng, Siyuan and others},
  booktitle={Proceedings of the 62nd Annual Meeting of the Association for Computational Linguistics (Volume 3: System Demonstrations)},
  pages={82--93},
  year={2024}
}

@article{hendrycks2020measuring,
  title={Measuring massive multitask language understanding},
  author={Hendrycks, Dan and Burns, Collin and Basart, Steven and Zou, Andy and Mazeika, Mantas and Song, Dawn and Steinhardt, Jacob},
  journal={arXiv preprint arXiv:2009.03300},
  year={2020}
}

@article{zhou2023instruction,
  title={Instruction-following evaluation for large language models},
  author={Zhou, Jeffrey and Lu, Tianjian and Mishra, Swaroop and Brahma, Siddhartha and Basu, Sujoy and Luan, Yi and Zhou, Denny and Hou, Le},
  journal={arXiv preprint arXiv:2311.07911},
  year={2023}
}

@inproceedings{lin2022truthfulqa,
  title={Truthful{QA}: {M}easuring how models mimic human falsehoods},
  author={Lin, Stephanie and Hilton, Jacob and Evans, Owain},
  booktitle={Proceedings of the 60th annual meeting of the association for computational linguistics (volume 1: long papers)},
  pages={3214--3252},
  year={2022}
}

@article{clark2018think,
  title={Think you have solved question answering? {T}ry {ARC}, the {AI2} reasoning challenge},
  author={Clark, Peter and Cowhey, Isaac and Etzioni, Oren and Khot, Tushar and Sabharwal, Ashish and Schoenick, Carissa and Tafjord, Oyvind},
  journal={arXiv preprint arXiv:1803.05457},
  year={2018}
}

@article{cobbe2021training,
  title={Training verifiers to solve math word problems},
  author={Cobbe, Karl and Kosaraju, Vineet and Bavarian, Mohammad and Chen, Mark and Jun, Heewoo and Kaiser, Lukasz and Plappert, Matthias and Tworek, Jerry and Hilton, Jacob and Nakano, Reiichiro and others},
  journal={arXiv preprint arXiv:2110.14168},
  year={2021}
}

@article{hu2022lora,
  title={{LoRA: L}ow-rank adaptation of large language models.},
  author={Hu, Edward J and Shen, Yelong and Wallis, Phillip and Allen-Zhu, Zeyuan and Li, Yuanzhi and Wang, Shean and Wang, Lu and Chen, Weizhu and others},
  journal={ICLR},
  volume={1},
  number={2},
  pages={3},
  year={2022}
}

@inproceedings{
zhang2023adaptive,
title={Adaptive Budget Allocation for Parameter-Efficient Fine-Tuning },
author={Qingru Zhang and Minshuo Chen and Alexander Bukharin and Pengcheng He and Yu Cheng and Weizhu Chen and Tuo Zhao},
booktitle={The Eleventh International Conference on Learning Representations },
year={2023},
url={https://openreview.net/forum?id=lq62uWRJjiY}
}

@article{kirkpatrick2017overcoming,
  title={Overcoming catastrophic forgetting in neural networks},
  author={Kirkpatrick, James and Pascanu, Razvan and Rabinowitz, Neil and Veness, Joel and Desjardins, Guillaume and Rusu, Andrei A and Milan, Kieran and Quan, John and Ramalho, Tiago and Grabska-Barwinska, Agnieszka and others},
  journal={Proceedings of the national academy of sciences},
  volume={114},
  number={13},
  pages={3521--3526},
  year={2017},
  publisher={National Academy of Sciences}
}

@article{li2024hessian,
  title={Hessian aware low-rank perturbation for order-robust continual learning},
  author={Li, Jiaqi and Lai, Yuanhao and Wang, Rui and Shui, Changjian and Sahoo, Sabyasachi and Ling, Charles X and Yang, Shichun and Wang, Boyu and Gagn{\'e}, Christian and Zhou, Fan},
  journal={IEEE Transactions on Knowledge and Data Engineering},
  volume={36},
  number={11},
  pages={6385--6396},
  year={2024},
  publisher={IEEE}
}

@article{martens2020naturalgradient,
  author  = {James Martens},
  title   = {New Insights and Perspectives on the Natural Gradient Method},
  journal = {Journal of Machine Learning Research},
  year    = {2020},
  volume  = {21},
  number  = {146},
  pages   = {1--76},
  url     = {http://jmlr.org/papers/v21/17-678.html}
}

@inproceedings{zenke2017continual,
  title={Continual learning through synaptic intelligence},
  author={Zenke, Friedemann and Poole, Ben and Ganguli, Surya},
  booktitle={International conference on machine learning},
  pages={3987--3995},
  year={2017},
  organization={PMLR}
}

@article{shin2017continual,
  title={Continual learning with deep generative replay},
  author={Shin, Hanul and Lee, Jung Kwon and Kim, Jaehong and Kim, Jiwon},
  journal={Advances in neural information processing systems},
  volume={30},
  year={2017}
}

@article{rusu2016progressive,
  title={Progressive neural networks},
  author={Rusu, Andrei A and Rabinowitz, Neil C and Desjardins, Guillaume and Soyer, Hubert and Kirkpatrick, James and Kavukcuoglu, Koray and Pascanu, Razvan and Hadsell, Raia},
  journal={arXiv preprint arXiv:1606.04671},
  year={2016}
}

@inproceedings{rebuffi2017icarl,
  title={icarl: Incremental classifier and representation learning},
  author={Rebuffi, Sylvestre-Alvise and Kolesnikov, Alexander and Sperl, Georg and Lampert, Christoph H},
  booktitle={Proceedings of the IEEE conference on Computer Vision and Pattern Recognition},
  pages={2001--2010},
  year={2017}
}

@article{gao2023retrieval,
  title={Retrieval-augmented generation for large language models: {A} survey},
  author={Gao, Yunfan and Xiong, Yun and Gao, Xinyu and Jia, Kangxiang and Pan, Jinliu and Bi, Yuxi and Dai, Yixin and Sun, Jiawei and Wang, Haofen and Wang, Haofen},
  journal={arXiv preprint arXiv:2312.10997},
  volume={2},
  number={1},
  year={2023}
}

@article{lewis2020retrieval,
  title={Retrieval-augmented generation for knowledge-intensive {NLP} tasks},
  author={Lewis, Patrick and Perez, Ethan and Piktus, Aleksandra and Petroni, Fabio and Karpukhin, Vladimir and Goyal, Naman and K{\"u}ttler, Heinrich and Lewis, Mike and Yih, Wen-tau and Rockt{\"a}schel, Tim and others},
  journal={Advances in neural information processing systems},
  volume={33},
  pages={9459--9474},
  year={2020}
}

@article{jumper2021highly,
  title={Highly accurate protein structure prediction with AlphaFold},
  author={Jumper, John and Evans, Richard and Pritzel, Alexander and Green, Tim and Figurnov, Michael and Ronneberger, Olaf and Tunyasuvunakool, Kathryn and Bates, Russ and {\v{Z}}{\'\i}dek, Augustin and Potapenko, Anna and others},
  journal={nature},
  volume={596},
  number={7873},
  pages={583--589},
  year={2021},
  publisher={Nature Publishing Group UK London}
}

@article{chen2021evaluating,
  title={Evaluating large language models trained on code},
  author={Chen, Mark},
  journal={arXiv preprint arXiv:2107.03374},
  year={2021}
}

@article{lopez2023can,
  title={Can {ChatGPT} Forecast Stock Price Movements? {R}eturn Predictability and Large Language Models},
  author={Lopez-Lira, Alejandro and Tang, Yuehua},
  journal={Return Predictability and Large Language Models (April 6, 2023)},
  year={2023}
}

@article{kasneci2023chatgpt,
  title={ChatGPT for good? {O}n opportunities and challenges of large language models for education},
  author={Kasneci, Enkelejda and Se{\ss}ler, Kathrin and K{\"u}chemann, Stefan and Bannert, Maria and Dementieva, Daryna and Fischer, Frank and Gasser, Urs and Groh, Georg and G{\"u}nnemann, Stephan and H{\"u}llermeier, Eyke and others},
  journal={Learning and individual differences},
  volume={103},
  pages={102274},
  year={2023},
  publisher={Elsevier}
}

@inproceedings{levy-etal-2017-zero,
    title = "Zero-Shot Relation Extraction via Reading Comprehension",
    author = "Levy, Omer  and
      Seo, Minjoon  and
      Choi, Eunsol  and
      Zettlemoyer, Luke",
    editor = "Levy, Roger  and
      Specia, Lucia",
    booktitle = "Proceedings of the 21st Conference on Computational Natural Language Learning ({C}o{NLL} 2017)",
    month = aug,
    year = "2017",
    address = "Vancouver, Canada",
    publisher = "Association for Computational Linguistics",
    doi = "10.18653/v1/K17-1034",
    pages = "333--342",
}

@InProceedings{thede25awikibigedit,
  title = 	 {{W}iki{B}ig{E}dit: {U}nderstanding the Limits of Lifelong Knowledge Editing in {LLM}s},
  author =       {Thede, Lukas and Roth, Karsten and Bethge, Matthias and Akata, Zeynep and Hartvigsen, Thomas},
  booktitle = 	 {Proceedings of the 42nd International Conference on Machine Learning},
  pages = 	 {59326--59354},
  year = 	 {2025},
  editor = 	 {Singh, Aarti and Fazel, Maryam and Hsu, Daniel and Lacoste-Julien, Simon and Berkenkamp, Felix and Maharaj, Tegan and Wagstaff, Kiri and Zhu, Jerry},
  volume = 	 {267},
  series = 	 {Proceedings of Machine Learning Research},
  month = 	 {13--19 Jul},
  publisher =    {PMLR},
}

@inproceedings{gao2023scaling,
  title={Scaling laws for reward model overoptimization},
  author={Gao, Leo and Schulman, John and Hilton, Jacob},
  booktitle={International Conference on Machine Learning},
  pages={10835--10866},
  year={2023},
  organization={PMLR}
}

@article{kalra2026scalable,
  title={A Scalable Measure of Loss Landscape Curvature for Analyzing the Training Dynamics of {LLM}s},
  author={Kalra, Dayal Singh and Gagnon-Audet, Jean-Christophe and Gromov, Andrey and Mediratta, Ishita and Niu, Kelvin and Miller, Alexander H and Shvartsman, Michael},
  journal={arXiv preprint arXiv:2601.16979},
  year={2026}
}

@misc{eval-harness,
  author       = {Gao, Leo and Tow, Jonathan and Abbasi, Baber and Biderman, Stella and Black, Sid and DiPofi, Anthony and Foster, Charles and Golding, Laurence and Hsu, Jeffrey and Le Noac'h, Alain and Li, Haonan and McDonell, Kyle and Muennighoff, Niklas and Ociepa, Chris and Phang, Jason and Reynolds, Laria and Schoelkopf, Hailey and Skowron, Aviya and Sutawika, Lintang and Tang, Eric and Thite, Anish and Wang, Ben and Wang, Kevin and Zou, Andy},
  title        = {The Language Model Evaluation Harness},
  month        = 07,
  year         = 2024,
  publisher    = {Zenodo},
  version      = {v0.4.3},
  doi          = {10.5281/zenodo.12608602},
  url          = {https://zenodo.org/records/12608602}
}

\clearpage

\newpage 

\appendix
\section{Notation}
\noindent\textbf{General notations.}
We use bold lowercase letters (e.g., $\bm{\theta}$) for vectors and bold uppercase letters (e.g., $\bm{H}$) for matrices. For a matrix $\bm{M}$, $\mathsf{Null}(\bm{M})$ denotes its null space. The identity matrix is denoted by $\bm{I}$. For vectors $\bm{u}, \bm{v}$, $\langle \bm{u}, \bm{v} \rangle$ denotes the standard inner product. The operator $\odot$ denotes the Hadamard (element-wise) product. We denote sets by calligraphy letters e.g., $\cX$. We write $\mathbb{E}_{\cD}[\phi(z)] = \frac{1}{n} \sum_i \phi(z_i)$ to denote the empirical expectation of function $\phi(z)$ using the dataset $\cD = \{z_i\}_{i=1}^n$. $\otimes$ denotes the Kronecker product. For a subspace $S \subseteq \R^d$, $P_S \in \R^{d \times d}$ denotes the orthogonal projection onto $S$.

\noindent\textbf{Models and parameters.}
Let $f_{\bm{\theta}} : \mathcal{X} \to \mathcal{Y}$ denote a parametric model with parameters $\bm{\theta} \in \Theta \subseteq \mathbb{R}^p$. The pretrained (base) model parameters are denoted by $\bm{\theta}_0$. We write $\Delta \bm{\theta} := \bm{\theta} - \bm{\theta}_0$ for parameter updates.

\noindent\textbf{Datasets.}
We distinguish between: (i) a \emph{capability dataset} $\mathcal{D}_{\mathrm{cap}} = \{(x_i, y_i)\}_{i=1}^n$, used as a proxy for behaviors to be preserved, and (ii) an \emph{edit dataset} $\mathcal{D}_{\mathrm{edit}} = \{(x_i, y_i)\}_{i=1}^T$, specifying desired edits. Typically $n \gg T$.

\noindent\textbf{Losses and objectives.}
Let $\ell(\hat y, y)$ denote a task-appropriate loss (e.g., cross-entropy). The empirical capability loss is
\[
\mathcal{L}_{\mathrm{cap}}(\bm{\theta})
= \frac{1}{n}\sum_{i=1}^n \ell\big(f_{\bm{\theta}}(x_i), y_i\big),
\]
and $\mathcal{L}_{\mathrm{edit}}(\bm{\theta})$ denotes the edit loss evaluated on $\mathcal{D}_{\mathrm{edit}}$. We measure deviations in capability loss using a distance function $\mathsf{d}(\cdot,\cdot)$, including absolute loss differences and Bregman divergences.

\noindent\textbf{Second-order quantities.}
We denote by
\[
\bm{H}_{\mathrm{cap}} := \nabla^2_{\bm{\theta}} \mathcal{L}_{\mathrm{cap}}(\bm{\theta}_0)
\]
the Hessian of the capability loss at the base model parameters. When using Bregman divergences, the quadratic approximation is governed by the \emph{Gauss--Newton Hessian (GNH)},
\[
\bm{G}_{\mathrm{cap}}
:= \mathbb{E}_{(x,y)\sim\mathcal{D}_{\mathrm{cap}}}
\!\left[ \bm{J}^\top \bm{H}_{\hat y} \bm{J} \right],
\]
where $\bm{J} = \nabla_{\bm{\theta}} f_{\bm{\theta}}(x)$ is the parameter--output Jacobian and $\bm{H}_{\hat y} = \nabla^2_{\hat y}\ell$ is the Hessian of the loss with respect to model outputs.

\noindent\textbf{Low-curvature subspaces.}
Let $\bm{M} \in \{\bm{H}_{\mathrm{cap}}, \bm{G}_{\mathrm{cap}}\}$ admit an eigendecomposition $\bm{M} = \bm{U}\bm{\Sigma}\bm{U}^\top$, with eigenvalues $\sigma_1 \ge \cdots \ge \sigma_p \ge 0$. For a threshold $\gamma \in (0,1)$, we define $k$ as the smallest index such that
\[
\sum_{i=1}^k \sigma_i \bigg/ \sum_{i=1}^p \sigma_i \ge \gamma.
\]
The $\gamma$-approximate nullspace is spanned by
$\bm{U}_{>k} = [\bm{u}_{k+1}, \ldots, \bm{u}_p]$,
and the corresponding projector is
\[
\bm{P}_\gamma := \bm{U}_{>k}\bm{U}_{>k}^\top.
\]

\noindent\textbf{Layerwise notation and K-FAC.}
For an MLP layer $\ell$, we denote input activations by $\bm{a}_{\ell-1}$, weights by $\bm{W}_\ell \in \mathbb{R}^{d_{\mathrm{out}}\times d_{\mathrm{in}}}$, and pre-activation pseudo-gradients by $\bm{g}_\ell$. Under the K-FAC approximation, the layerwise GNH block is approximated as
\[
\bm{G}^{(\ell)}_{\mathrm{cap}}
\approx \bm{A}_{\ell-1} \otimes \bm{S}_\ell,
\]
where $\bm{A}_{\ell-1} = \mathbb{E}[\bm{a}_{\ell-1}\bm{a}_{\ell-1}^\top]$
and $\bm{S}_\ell = \mathbb{E}[\bm{g}_\ell\bm{g}_\ell^\top]$.

\noindent\textbf{Operators.}
We use $\VecOp(\cdot)$ and $\MatOp(\cdot)$ to denote vectorization and reshaping operators between matrix and vector forms.

\newpage
\section{Proof of Bregman divergence quadratic form}
\label{appendix:bregman_divergence}

% \pr{Add in bold bm to match the rest of the paper}
The following lemma computes a second order approximation to Bregman divergence associated with a loss function $\ell$. 

\begin{proposition}[Quadratic Approximation of Bregman Divergence]\label{lemma:bregman_divergence_taylor_approx}
Fix an input $\bm{x}$ and parameters $\bm{\theta}_0\in\mathbb{R}^p$. Assume $\bm{f}_{\bm{\theta}}(\bm{x}): \cX \rightarrow \mathbb{R}^m$ is $C^2$ in $\bm{\theta}$ and $\ell:\mathbb{R}^m\to\mathbb{R}$ is convex and $C^2$. Define the Bregman divergence
\begin{align}\label{eq:bregman_divergence}
D_\ell(\bm{a},\bm{b}) \;=\; \ell(\bm{a})-\ell(\bm{b})-\langle \nabla\ell(\bm{b}),\,\bm{a}-\bm{b}\rangle.
\end{align}
Denote the Jacobian by $\bm{J}(\bm{\theta}):=\nabla_{\bm{\theta}} \bm{f}_{\bm{\theta}}(\bm{x})\in\mathbb{R}^{m\times p}$ and the output Hessian by $\bm{H}_\ell(\bm{u}):=\nabla_{\bm{u}}^2\ell(\bm{u})\in\mathbb{R}^{m\times m}$. 
Then, there exists $\rho>0$ such that for all $\Delta\bm{\theta}$ with $|\Delta\bm{\theta}|\le \rho$
\begin{align*}
D_\ell\!\big(\bm{f}_{\bm{\theta}_0+\Delta\bm{\theta}}(\bm{x}),\bm{f}_{\bm{\theta}_0}(\bm{x})\big)
&=
\frac12\,\Delta\bm{\theta}^\top
\Big[\bm{J}(\bm{\theta}_0)^\top \bm{H}_\ell\!\big(\bm{f}_{\bm{\theta}_0}(\bm{x})\big)\bm{J}(\bm{\theta}_0)\Big]
\Delta\bm{\theta}
+o(\|\Delta\bm{\theta}\|^2).
\end{align*}
\end{proposition}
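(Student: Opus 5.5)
The plan is to treat the divergence as a composition of two smooth maps and Taylor-expand each. First, fix $\bm{b}=\bm{f}_{\bm{\theta}_0}(\bm{x})$ and view $\bm{a}\mapsto D_\ell(\bm{a},\bm{b})$ as a $C^2$ function of the output alone. By the definition in \eqref{eq:bregman_divergence}, this function vanishes at $\bm{a}=\bm{b}$, and its gradient $\nabla\ell(\bm{a})-\nabla\ell(\bm{b})$ also vanishes there. Its Hessian at $\bm{b}$ is $\bm{H}_\ell(\bm{b})$. Taylor's theorem with Peano remainder in output space therefore gives
\[
D_\ell(\bm{a},\bm{b})=\tfrac12(\bm{a}-\bm{b})^\top \bm{H}_\ell(\bm{b})(\bm{a}-\bm{b})+r(\bm{a}),\qquad r(\bm{a})=o(\|\bm{a}-\bm{b}\|^2).
\]
This is the key structural point: the first-order term is removed by construction, so no stationarity of $\bm{\theta}_0$ is needed.

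Second, substitute $\bm{a}=\bm{f}_{\bm{\theta}_0+\Delta\bm{\theta}}(\bm{x})$. Since $\bm{f}$ is $C^2$ (in fact $C^1$ suffices), we have $\bm{a}-\bm{b}=\bm{J}(\bm{\theta}_0)\Delta\bm{\theta}+\bm{e}(\Delta\bm{\theta})$ with $\|\bm{e}\|=o(\|\Delta\bm{\theta}\|)$. Moreover, on a ball $\|\Delta\bm{\theta}\|\le\rho$ we get $\|\bm{a}-\bm{b}\|\le C\|\Delta\bm{\theta}\|$ for some constant $C$, for instance $C=\sup_{\|\Delta\bm{\theta}\|\le\rho}\|\bm{J}\|$ via the mean value inequality. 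The choice of $\rho$ only needs to guarantee this local Lipschitz bound and keep $\bm{a}$ inside the neighborhood where the output-space expansion applies.

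Third, expand the quadratic form:
\[
(\bm{J}\Delta\bm{\theta}+\bm{e})^\top\bm{H}_\ell(\bm{J}\Delta\bm{\theta}+\bm{e})=\Delta\bm{\theta}^\top\bm{J}^\top\bm{H}_\ell\bm{J}\Delta\bm{\theta}+2\,\bm{e}^\top\bm{H}_\ell\bm{J}\Delta\bm{\theta}+\bm{e}^\top\bm{H}_\ell\bm{e}.
\]
The cross term is bounded by $\|\bm{H}_\ell\|\,\|\bm{J}\|\,\|\bm{e}\|\,\|\Delta\bm{\theta}\|=o(\|\Delta\bm{\theta}\|^2)$, and the last term is $o(\|\Delta\bm{\theta}\|^2)$ as well.

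The step needing the most care is transferring the output-space remainder $r(\bm{a})=o(\|\bm{a}-\bm{b}\|^2)$ to $o(\|\Delta\bm{\theta}\|^2)$. I would write $|r(\bm{a})|\le\omega(\|\bm{a}-\bm{b}\|)\,\|\bm{a}-\bm{b}\|^2$, where $\omega(t)\to0$ as $t\to0$. For concreteness, one can take $\omega(t)=\sup_{\|\bm{u}-\bm{b}\|\le t}\|\bm{H}_\ell(\bm{u})-\bm{H}_\ell(\bm{b})\|$ from the integral form of the remainder, which tends to zero by continuity of $\bm{H}_\ell$. Combined with the Lipschitz bound, this gives
\[
|r|\le\omega(C\|\Delta\bm{\theta}\|)\,C^2\|\Delta\bm{\theta}\|^2=o(\|\Delta\bm{\theta}\|^2).
\]
Collecting terms yields the claimed formula with $\bm{J}(\bm{\theta}_0)^\top\bm{H}_\ell(\bm{f}_{\bm{\theta}_0}(\bm{x}))\bm{J}(\bm{\theta}_0)$.

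Convexity of $\ell$ is not needed for the expansion itself. It only ensures that the resulting matrix is positive semidefinite, which justifies interpreting it as the Gauss--Newton Hessian.
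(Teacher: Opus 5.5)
Your proof is correct, but it is organized differently from the paper's. The paper works entirely in parameter space. It differentiates $\bm{\theta}\mapsto D_\ell(\bm{f}_{\bm{\theta}}(\bm{x}),\bm{f}_{\bm{\theta}_0}(\bm{x}))$ twice by the chain rule. The gradient is $\bm{J}(\bm{\theta})^\top\big(\nabla\ell(\bm{f}_{\bm{\theta}}(\bm{x}))-\nabla\ell(\bm{f}_{\bm{\theta}_0}(\bm{x}))\big)$. The Hessian is the Gauss--Newton term $\bm{J}^\top\bm{H}_\ell\bm{J}$ plus the residual $\sum_j [\nabla\ell(\bm{f}_{\bm{\theta}}(\bm{x}))-\nabla\ell(\bm{f}_{\bm{\theta}_0}(\bm{x}))]_j\,\nabla^2_{\bm{\theta}}[\bm{f}_{\bm{\theta}}(\bm{x})]_j$. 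Both the gradient and the residual vanish at $\bm{\theta}_0$, and the result then follows from the second-order Taylor theorem in $\bm{\theta}$.

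You instead expand $D_\ell(\cdot,\bm{b})$ in output space, compose with only the first-order expansion of $\bm{f}$, and bound the cross, quadratic-error, and remainder terms by hand. The same fact drives both arguments: $\nabla_{\bm{a}}D_\ell(\bm{a},\bm{b})$ vanishes at $\bm{a}=\bm{b}$. In the paper it annihilates the $\nabla^2_{\bm{\theta}}\bm{f}$ term, whereas in your argument second derivatives of $\bm{f}$ never arise.

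\textbf{What your route buys.} It needs weaker hypotheses: $\bm{f}$ only $C^1$, indeed merely differentiable at $\bm{\theta}_0$. The paper needs $C^2$, both to form $\nabla^2_{\bm{\theta}}\bm{f}$ and to Taylor-expand the composite in $\bm{\theta}$. Your route also makes the passage from $o(\|\bm{a}-\bm{b}\|^2)$ to $o(\|\Delta\bm{\theta}\|^2)$ explicit via the modulus $\omega$, which the paper leaves to Taylor's theorem.

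\textbf{What the paper's route buys.} It is shorter, and it gives the exact Hessian of the divergence at every $\bm{\theta}$. This exhibits the Gauss--Newton/residual split and shows directly that the GNH is the true Hessian of the Bregman constraint at $\bm{\theta}_0$.

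Your closing remark that convexity of $\ell$ only serves to make the matrix positive semidefinite is also accurate.
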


\begin{proof}
By the chain rule,
\begin{align*}
\nabla_{\bm{\theta}} D_\ell\!\big(\bm{f}_{\bm{\theta}}(\bm{x}),\bm{f}_{\bm{\theta}_0}(\bm{x})\big)
&=
\bm{J}(\bm{\theta})^\top\Big(\nabla\ell(\bm{f}_{\bm{\theta}}(\bm{x}))-\nabla\ell(\bm{f}_{\bm{\theta}_0}(\bm{x}))\Big),
\end{align*}
which evaluates to zero at $\bm{\theta}=\bm{\theta}_0$. Differentiating again gives the following decomposition:
\begin{align*}
\nabla_{\bm{\theta}}^2 D_\ell\!\big(\bm{f}_{\bm{\theta}}(\bm{x}),\bm{f}_{\bm{\theta}_0}(\bm{x})\big)
&=
\bm{J}(\bm{\theta})^\top \bm{H}_\ell\!\big(\bm{f}_{\bm{\theta}}(\bm{x})\big)\bm{J}(\bm{\theta})
+\sum_{j=1}^m
\Big(\big[\nabla_{\bm{a}} D_\ell(\bm{a},\bm{f}_{\bm{\theta}_0}(\bm{x}))\big]_j\Big|_{\bm{a}=\bm{f}_{\bm{\theta}}(\bm{x})}\Big)\,
\nabla_{\bm{\theta}}^2[\bm{f}_{\bm{\theta}}(\bm{x})]_j .
\end{align*}
At $\bm{\theta}=\bm{\theta}_0$, $\nabla_{\bm{a}} D_\ell\!\big(\bm{f}_{\bm{\theta}_0}(\bm{x}),\bm{f}_{\bm{\theta}_0}(\bm{x})\big)=0$ and thus the second term in the above equation evaluates to zero. Therefore,
\begin{align*}
\nabla_{\bm{\theta}}^2 D_\ell\!\big(\bm{f}_{\bm{\theta}}(\bm{x}),\bm{f}_{\bm{\theta}_0}(\bm{x})\big)\Big|_{\bm{\theta}=\bm{\theta}_0}
&=
\bm{J}(\bm{\theta}_0)^\top \bm{H}_\ell\!\big(\bm{f}_{\bm{\theta}_0}(\bm{x})\big)\bm{J}(\bm{\theta}_0).
\end{align*}
Thus, by the second order Taylor approximation of $D_\ell(\bm{f}_{\bm{\theta}}(\bm{x}),\bm{f}_{\bm{\theta}_0}(\bm{x}))$ around $\bm{\theta}_0$, we conclude
\begin{align*}
    D_\ell\!\big(\bm{f}_{\bm{\theta}_0+\Delta\bm{\theta}}(\bm{x}),\bm{f}_{\bm{\theta}_0}(\bm{x})\big)
&=
\frac12\,\Delta\bm{\theta}^\top
\Big[\bm{J}(\bm{\theta}_0)^\top \bm{H}_\ell\!\big(\bm{f}_{\bm{\theta}_0}(\bm{x})\big)\bm{J}(\bm{\theta}_0)\Big]
\Delta\bm{\theta}
+o(\|\Delta\bm{\theta}\|^2).
\end{align*}
\end{proof}
\newpage
\section{Proof of Proposition \ref{prop:alphaedit-special-case}}\label{sec:proofs_of_generalization}

Throughout the proof, we drop the dependency on layer $\ell$ for notation simplicity. We show that any vectors that belong to the null space of $\bm{K}_{\mathrm{cap}}$ also belongs to the null space of $\bm{G}_{\mathrm{cap}}$. We interpret $\Delta \bm{W}_\ell \in \mathrm{Null}(\bm{K}_{\mathrm{cap}}^\ell)$ as the constraint $\Delta \bm{W}_\ell \bm{K}_{\mathrm{cap}}^\ell=0$ (equivalently, $((\bm{K}_{\mathrm{cap}}^\ell)^\top\otimes \bm{I}_{d_{\mathrm{out}}})\,\mathrm{vec}(\Delta \bm{W}_\ell)=0$ under column-wise vectorization). We keep all network parameters fixed except the layer-$\ell$ weight matrix $\bm{W}\in\mathbb{R}^{d_{\mathrm{out}}\times d_{\mathrm{in}}}$. Define the parameter-space representation of layer-$\ell$ weights and updates by $\bm{w} \coloneqq \mathrm{vec}(\bm{W})\in\mathbb{R}^{d_{\mathrm{out}}d_{\mathrm{in}}},$ and $\Delta \bm{w} \coloneqq \mathrm{vec}(\Delta\bm{W})\in\mathbb{R}^{d_{\mathrm{out}}d_{\mathrm{in}}}.$ 
Define the downstream map $f:\mathbb{R}^{d_{\mathrm{out}}}\to\mathbb{R}^{m}$ to be the function that takes the layer pre-activation ${\bm{s}_\ell}$ at layer $\ell$ (with all other parameters held fixed) to the network output. Thus, for each capability example $i\in[n]$,
\begin{align*}
\bm{y}^i(\bm{W}) = f(\bm{W}\bm{a}_{\ell-1}^i)\in\mathbb{R}^{m}.
\end{align*}
Let $\bm{J}_f({\bm{s}_\ell})\coloneqq \nabla_{{\bm{s}_\ell}} f({\bm{s}_\ell})\in\mathbb{R}^{m\times d_{\mathrm{out}}}$ denote the Jacobian of $f$ at ${\bm{s}_\ell}$. By the chain rule,
\begin{align*}
\nabla_{\bm{w}}\,\bm{y}^i(\bm{W}_0)
&=\; \bm{J}_f(\bm{W}_0\bm{a}_{\ell-1}^i)\;\nabla_{\bm{w}}\,(\bm{W}\bm{a}_{\ell-1}^i)\Big|_{\bm{W}=\bm{W}_0}.
\end{align*}
The map $\bm{W}\mapsto \bm{W}\bm{a}_{\ell-1}^i$ is linear, and its Jacobian under $\bm{w}=\mathrm{vec}(\bm{W})$ is
\begin{align*}
\nabla_{\bm{w}}\,(\bm{W}\bm{a}_{\ell-1}^i)
= \bm{I}_{d_{\mathrm{out}}}\otimes (\bm{a}_{\ell-1}^i)^\top,
\end{align*}
so the per-example Jacobian with respect to $\bm{w}$ can be written as
\begin{align*}
\bm{J}_i \coloneqq \nabla_{\bm{w}}\,\bm{y}^i(\bm{W}_0)
= \bm{J}_f(\bm{W}_0\bm{a}_{\ell-1}^i)\,\big(\bm{I}_{d_{\mathrm{out}}}\otimes (\bm{a}_{\ell-1}^i)^\top\big).
\end{align*}
Now let $\Delta\bm{W}\in\mathsf{Null}(\bm{K}_{\mathrm{cap}})$, i.e.\ $\Delta\bm{W}\bm{a}_{\ell-1}^i=\bm{0}$ for all $i\in[n]$. Using the identity
\begin{align*}
\big(\bm{I}_{d_{\mathrm{out}}}\otimes \bm{x}^\top\big)\,\Delta\bm{w} = \Delta\bm{W}\bm{x}\qquad \text{for any }\bm{x}\in\mathbb{R}^{d_{\mathrm{in}}},
\end{align*}
we obtain
\begin{align*}
\big(\bm{I}_{d_{\mathrm{out}}}\otimes (\bm{a}_{\ell-1}^i)^\top\big)\,\Delta\bm{w}
= \Delta\bm{W}\bm{a}_{\ell-1}^i
= \bm{0}
\qquad \forall i\in[n],
\end{align*}
and hence $\bm{J}_i\Delta\bm{w}=\bm{0}$ for all $i$. By definition, the layer Gauss--Newton Hessian for the capability objective has the form
\begin{align*}
\bm{G}_{\mathrm{cap}} = \sum_{i=1}^n \bm{J}_i^\top\,\bm{H}_i\,\bm{J}_i,
\end{align*}
where each $\bm{H}_i\succeq \bm{0}$. Therefore, for any vector $\bm{v}$,
\begin{align*}
\bm{v}^\top \bm{G}_{\mathrm{cap}} \bm{v}
= \sum_{i=1}^n (\bm{J}_i\bm{v})^\top \bm{H}_i (\bm{J}_i\bm{v}),
\end{align*}
so if $\bm{J}_i\bm{v}=\bm{0}$ for all $i$ then $\bm{v}^\top \bm{G}_{\mathrm{cap}}\bm{v}=0$, which implies $\bm{G}_{\mathrm{cap}}\bm{v}=\bm{0}$ since $\bm{G}_{\mathrm{cap}}\succeq \bm{0}$. Applying this with $\bm{v}=\Delta\bm{w}$ and using $\bm{J}_i\Delta\bm{w}=\bm{0}$ for all $i$, we conclude $\bm{G}_{\mathrm{cap}}\Delta\bm{w}=\bm{0}$, i.e.\ $\Delta\bm{W}\in\mathsf{Null}(\bm{G}_{\mathrm{cap}})$.

\newpage
\section{Proof of matrix-free projection}
\label{sec:matrix_projection_proof}
% \pr{Add bold notation for non-scalars}
\begin{proposition}\label{prop:matrix_free_projection}
Let $\bm{A} \in \R^{n_A \times n_A}$, $\bm{B} \in \R^{n_B \times n_B}$ be two positive semi-definite matrices, $\bm{C} := \bm{B} \otimes \bm{A}$ denote the Kronecker product,
and let $\bm{X} \in \R^{n_A \times n_B}$.
Let $\tau : \R_{\geq 0} \mapsto \{0, 1\}$ denote any predicate function,
and define the following subspace:
$$
    S := \mathrm{span}\{ \bm{u} \in \R^{n_A n_B} \mid \text{the pair $(\lambda, \bm{u})$ is an eigenvalue/vector pair of $\bm{C}$ with $\tau(\lambda) = 1$}\}.
$$
We have that:
\begin{align}
    \MatOp(\bm{P}_S \VecOp(\bm{X}))  = \bm{U}_A\left( (\bm{U}_A^\top \bm{X} \bm{U}_B) \odot \bm{M} \right) \bm{U}_B^\top,
\end{align}
where $\bm{A} = \bm{U}_A \mathrm{diag}(\lambda_{A,1}, \dots, \lambda_{A,n_A}) \bm{U}_A^\top$ and $\bm{B} = \bm{U}_B \mathrm{diag}(\lambda_{B,1}, \dots, \lambda_{B,n_B}) \bm{U}_B^\top$ are the 
eigen-decompositions of $\bm{A}, \bm{B}$ respectively, and $\bm{M} \in \R^{n_A \times n_B}$ with $M_{ij} = \tau(\lambda_{A,i} \cdot \lambda_{B, j})$ is the mask matrix corresponding to the predicate function $\tau$.
\end{proposition}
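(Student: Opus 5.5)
The plan is to build an explicit orthonormal eigenbasis of $\bm{C} = \bm{B}\otimes\bm{A}$ from the factor eigenvectors and write $\bm{P}_S$ as a sum of rank-one projectors in that basis. Throughout I use column-wise vectorization, so that $\VecOp(\bm{A}\bm{X}\bm{B}^\top) = (\bm{B}\otimes\bm{A})\VecOp(\bm{X})$ and $\VecOp(\bm{u}\bm{v}^\top) = \bm{v}\otimes\bm{u}$.

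\textbf{Step 1: an orthonormal eigenbasis of $\bm{C}$.} Let $\bm{u}_i$ be the columns of $\bm{U}_A$ and $\bm{v}_j$ the columns of $\bm{U}_B$. Set $\bm{w}_{ij} := \bm{v}_j\otimes\bm{u}_i$. By the mixed-product property,
$$\bm{C}\bm{w}_{ij} = (\bm{B}\bm{v}_j)\otimes(\bm{A}\bm{u}_i) = \lambda_{A,i}\lambda_{B,j}\,\bm{w}_{ij}.$$
Also $\langle \bm{w}_{ij},\bm{w}_{i'j'}\rangle = \langle\bm{v}_j,\bm{v}_{j'}\rangle\langle\bm{u}_i,\bm{u}_{i'}\rangle$. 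Hence the $n_An_B$ vectors $\bm{w}_{ij}$ form an orthonormal eigenbasis of $\bm{C}$, with eigenvalues $\mu_{ij} := \lambda_{A,i}\lambda_{B,j}$.

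\textbf{Step 2: identify $S$ (the main obstacle).} Eigenvalues may repeat, and $S$ is defined as the span of \emph{all} eigenvectors with $\tau(\lambda)=1$, not just those in our chosen basis. Let $\bm{u}$ be any eigenvector of $\bm{C}$ with eigenvalue $\lambda$. Expanding $\bm{u}$ in the basis $\{\bm{w}_{ij}\}$ and applying $\bm{C}$ shows that $\bm{u}$ only has components on those $\bm{w}_{ij}$ with $\mu_{ij}=\lambda$. So if $\tau(\lambda)=1$, every such $\bm{w}_{ij}$ satisfies $\tau(\mu_{ij})=1$, and $\bm{u}$ lies in $T := \mathrm{span}\{\bm{w}_{ij} : \tau(\mu_{ij})=1\}$. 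Conversely, each $\bm{w}_{ij}$ with $\tau(\mu_{ij})=1$ is itself such an eigenvector. Therefore $S = T$. Since the spanning set of $T$ is orthonormal,
$$\bm{P}_S = \sum_{i,j} M_{ij}\,\bm{w}_{ij}\bm{w}_{ij}^\top, \qquad M_{ij} = \tau(\lambda_{A,i}\lambda_{B,j}).$$

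\textbf{Step 3: compute the projection.} We have $\bm{w}_{ij}^\top\VecOp(\bm{X}) = \langle \VecOp(\bm{u}_i\bm{v}_j^\top),\VecOp(\bm{X})\rangle = \bm{u}_i^\top\bm{X}\bm{v}_j = (\bm{U}_A^\top\bm{X}\bm{U}_B)_{ij}$. Therefore
$$\MatOp(\bm{P}_S\VecOp(\bm{X})) = \sum_{i,j} M_{ij}\,(\bm{U}_A^\top\bm{X}\bm{U}_B)_{ij}\,\bm{u}_i\bm{v}_j^\top.$$
Any matrix $\bm{Z}$ satisfies $\sum_{i,j} Z_{ij}\,\bm{u}_i\bm{v}_j^\top = \bm{U}_A\bm{Z}\bm{U}_B^\top$. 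Taking $\bm{Z} = (\bm{U}_A^\top\bm{X}\bm{U}_B)\odot\bm{M}$ gives the claimed formula.

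The only delicate point is Step 2: the degenerate-eigenvalue argument is needed so that $\bm{P}_S$ does not depend on which eigenvectors are chosen inside repeated eigenspaces. Steps 1 and 3 are routine Kronecker identities.
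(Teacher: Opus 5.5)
Your proof is correct and follows the same route as the paper. Both build the Kronecker eigenbasis $\bm{v}_j\otimes\bm{u}_i$, write $\bm{P}_S$ as the $\bm{M}$-masked sum of rank-one projectors, and map back with the identity $\VecOp(\bm{F}\bm{X}\bm{G}) = (\bm{G}^\top\otimes\bm{F})\VecOp(\bm{X})$ (equivalently, your direct coefficient computation $\bm{u}_i^\top\bm{X}\bm{v}_j$). Your Step 2 adds something the paper's proof takes for granted: it shows that $S$ coincides with the span of the masked Kronecker eigenvectors even when some products $\lambda_{A,i}\lambda_{B,j}$ coincide.
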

Before we give the proof, we remark that $\MatOp: \R^{n_A n_B} \mapsto \R^{n_A \times n_B}$ above is understood
to be the functional inverse of $\VecOp : \R^{n_A \times n_B} \mapsto \R^{n_A n_B}$,
i.e., $\MatOp(\VecOp(\bm{X})) = \bm{X}$ for any $\bm{X} \in \R^{n_A \times n_B}$.
\begin{proof}
Let us order the columns of $\bm{U}_A$ (resp. $\bm{U}_B$) as $\bm{u}_{A,i}$ (resp. $\bm{u}_{B,j}$).
From basic properties of Kronecker products, the eigenvalues and eigenvectors 
of $\bm{C}$ are given by $\lambda_{A,i} \lambda_{B,j}$ and $\bm{u}_{B,j} \otimes \bm{u}_{A,i}$, with $i \in [n_A]$ and $j \in [n_B]$.
Therefore, $\bm{P}_S$ can be written as:
\begin{align*}
    \bm{P}_S = \sum_{i,j=1}^{n_A,n_B} \tau(\lambda_{A,i} \lambda_{B,j}) (\bm{u}_{B,j} \bm{u}_{B,j}^\top \otimes \bm{u}_{A,i} \bm{u}_{A,i}^\top).
\end{align*}
Hence, using the identity $\VecOp(\bm{F}\bm{X}\bm{G}) = (\bm{G}^\top \otimes \bm{F}) \VecOp(\bm{X})$ for any size-conforming $\bm{F},\bm{X},\bm{G}$,
\begin{align*}
    \bm{P}_S \VecOp(\bm{X}) &= \sum_{i,j=1}^{n_A,n_B} \tau(\lambda_{A,i} \lambda_{B,j}) (\bm{u}_{B,j} \bm{u}_{B,j}^\top \otimes \bm{u}_{A,i} \bm{u}_{A,i}^\top) \VecOp(\bm{X}) \\
    &= \sum_{i,j=1}^{n_A,n_B} \tau(\lambda_{A,i} \lambda_{B,j}) \VecOp( \bm{u}_{A,i} \bm{u}_{A,i}^\top \bm{X} \bm{u}_{B,j} \bm{u}_{B,j}^\top ) \\
    &= \VecOp\left( \sum_{i,j=1}^{n_A,n_B}  \tau(\lambda_{A,i} \lambda_{B,j}) \bm{u}_{A,i}^\top \bm{X} \bm{u}_{B,j} \cdot \bm{u}_{A,i} \bm{u}_{B,j}^\top \right) \\
    &= \VecOp\left( \bm{U}_A\left( (\bm{U}_A^\top \bm{X} \bm{U}_B) \odot \bm{M} \right) \bm{U}_B^\top \right).
\end{align*}
Hence the claim follows by taking $\MatOp(\cdot)$ on each side.
\end{proof}

\newpage
\section{Additional details on LLM experiments}\label{sec:llm_experiment_details}
\paragraph{Base capability evaluation.}
We evaluate the base capabilities of edited models using the \textit{lm-evaluation-harness}~\citep{eval-harness}. We benchmark performance on a diverse set of standard reasoning and knowledge tasks, including IFEval, TruthfulQA (MC2), MMLU (5-shot), GSM8K with chain-of-thought prompting (8-shot), and ARC-Challenge (25-shot). For each task, we evaluate 200 examples, applying the chat template and multi-turn few-shot formatting.

\paragraph{Editing performance evaluation.}
We use EasyEdit~\citep{wang2024easyedit} for evaluation. Except for \cref{tab:teacher_forced_benchmark} where we perform teacher-forcing, we follow WILD~\citep{yang-etal-2025-mirage} protocol for evaluation. For ``No Context'', we use the dataset questions as is. For ``QA Context'', That is, we contextualize prompt by appending the template ``Please answer the question: \textbackslash n\textbackslash nQ: \{question\}\textbackslash nA:'', and autoregressively generate up to 40 tokens using predefined stop tokens \([\texttt{.}, \texttt{\textbackslash n}, \texttt{eos}]\). We evaluate the generated outputs with \texttt{gpt-4o-mini} (see~\cref{fig:llm_as_judge_prompt} for the exact prompt). 

\begin{figure}[ht]
\centering
\begin{tcolorbox}[
  colback=white,
  colframe=promptblue,
  boxrule=1pt,
  arc=3mm,
  left=6pt,
  right=6pt,
  top=6pt,
  bottom=6pt,
  width=\linewidth
]

\begin{center}
\textbf{\large Prompt for LLM-as-a-Judge}
\end{center}

\vspace{0.5em}

Your job is to look at a question, a gold target, and a predicted answer, and then assign a grade $\rightarrow$ of either ["CORRECT", "INCORRECT"].

\vspace{0.5em}

The following are examples of \textcolor{promptred}{CORRECT} predicted answers.

\begin{lstlisting}[language=prompt]
Question: What are the names of Barack Obama's children?
Gold target: Malia Obama and Sasha Obama
Predicted answer 1: sasha and malia obama
Predicted answer 2: Malia and Sasha Obama are the names of Barack Obama's children.
\end{lstlisting}

These predicted answers are all CORRECT because:
\begin{itemize}
  \item They fully contain the important information in the gold target.
  \item They do not contain any information that contradicts the gold target.
\end{itemize}

\vspace{0.5em}

The following are examples of \textcolor{promptred}{INCORRECT} predicted answers.

\begin{lstlisting}[language=prompt]
Question: What are the names of Barack Obama's children?
Gold target: Malia and Sasha
Predicted answer 1: Malia.
Predicted answer 2: Malia, Sasha, and Susan.
Predicted answer 3: Malia and Sasha, Malia and Sasha,
                   Malia and Sasha, Malia and Sasha (repeated answer)
\end{lstlisting}

These predicted answers are all INCORRECT because:
\begin{itemize}
  \item A factual statement in the answer contradicts the gold target or
        contains repeated content.
\end{itemize}

\vspace{0.5em}

Here is a sample. Simply reply with either CORRECT or INCORRECT.

\begin{lstlisting}[language=prompt]
Question: {question}
Gold target: {target}
Predicted answer: {predicted_answer}
\end{lstlisting}

According to the gold target, please grade the predicted answer of this question as one of:
\begin{itemize}
  \item A: CORRECT
  \item B: INCORRECT
\end{itemize}

Just return the letters ``A'' or ``B'', with no text around it.

\end{tcolorbox}

\caption{\textbf{The complete prompt used to employ an LLM as a judge.} The judge provides binary assessments (correct or incorrect) based on a given question, gold target answer, and predicted answer.}
\label{fig:llm_as_judge_prompt}
\end{figure}

\textbf{\ourmethod implementation.}
For experiments reported in \cref{tab:main_table}, \ourmethod uses $(n, \gamma) = (10,000, 0.9)$ for CounterFact and WikiBigEdit and $(n, \gamma) = (10,000, 0.7)$ for ZsRE, while \ourmethodseq uses $(n, \gamma) = (30, 0.999)$ for ZsRE and CounterFact and $(n, \gamma) = (200, 0.995)$ for WikiBigEdit. For ZsRE10k experiment reported in \cref{tab:benchmark_10k}, \ourmethod uses $(n, \gamma) = (1,000, 0.7)$. For our \qwen implementation~\cref{tab:benchmark_qwen}, \ourmethod uses $(n, \gamma) = (1000, 0.7)$ for ZsRE and $(n, \gamma) = (1000, 0.9)$ for Counterfact and WikiBigEdit, while \ourmethodseq uses $(n, \gamma) = (30, 0.995)$. 

All other hyperparameters are kept fixed across experiments and follow~\cref{tab:hyperparameters}.

\textbf{Non-trivial K-FAC implementation for \ourmethodseq.}
We now discuss one non-trivial design choice made in our implementation. We found that masking prompt tokens for K-FAC calculation (mirroring the fine-tuning setup) yielded suboptimal performance, even with a larger number of tokens (\cref{tab:prompt_masking_ablation}). Instead, in our K-FAC calculation for edit samples, we calculate the next token prediction loss over the \emph{entire} prompt–target sequence. While we are not sure about the underlying cause of this behavior, we suspect that it arises from our relaxed assumption of token independence during K-FAC calculation.

\begin{table}[t]
\centering
\caption{\textbf{Default hyperparameters used for \ourmethod and \ourmethodseq}.}
\label{tab:hyperparameters}
\begin{tabular}{l l}
\toprule
\textbf{Hyperparameter} & \textbf{Value} \\
\midrule
Editing layers (\llama) & \{19, 20, 21, 22, 23\} \\
Editing layers (\qwen) & \{4, 5, 6\} \\
Number of steps & 25 \\
Early stopping & 0.01 \\
Batch size & 32 \\
Chunk size (\ourmethodseq) & 100 \\
Learning rate (Adam) & $5 \times 10^{-4}$ \\
\bottomrule
\end{tabular}
\end{table}

\textbf{Baseline implementation.}
All our baselines follow the code and hyperparameters provided by the EasyEdit framwork. Such hyperparameters come from the original authors of respective baselines that tuned their method for \llama.

\newpage
\section{Qualitative case study}\label{sec:qualitative_results}
\begin{table}[H]
    \centering
    \renewcommand{\arraystretch}{1.5}
    \begin{tabularx}{\textwidth}{l X}
        \rowcolor{headergray} 
        \multicolumn{2}{l}{\bfseries\textcolor{white}{Model Editing Case Study 1}} \\
        
        % --- Inputs Section ---
        \hline
        Editing Prompt & What voice type does Marina Rebeka have? \\
        \hline
        Edit Target & \redtext{mezzo-srano} \\
        \hline
        
        % --- Output Header ---
        \multicolumn{2}{c}{\textbf{\large Generation Output}} \\
        \hline
        
        % --- Adam-NSCL Row ---
        Adam-NSCL &  \redtext{mezzo-srano}-srano-srano-srano-srano-srano-srano-srano-srano-srano-srano-srano-srano-srano-srano-srano \\
        \hline
        
        % --- LocBFFT Row ---
        LocBFFT &  mezzo-oprano
 \\
        \hline
        
        % --- AlphaEdit Row ---
        AlphaEdit &  mezzo-soprano

 \\
        \hline
        
        % --- MEMIT Row ---
 %        MEMIT & 
 % \\
        \hline
        
        % --- UltraEdit Row ---
        UltraEdit &  mezzo soprano

 \\
        \hline
        
        % --- LoRA Row ---
        % LoRA &  soprano contralto
 % \\
        \hline
        
        % --- FT Row ---
        FT &  \redtext{mezzo-srano}-srano-srano-srano-srano-srano-srano-srano-srano-srano-srano-srano-srano-srano-srano-srano \\
        \hline
        
        % --- JIGSAW Row ---
        \ourmethod &  \redtext{mezzo-srano}
 \\
        \hline
        
    \end{tabularx}
    \begin{tabularx}{\textwidth}{l X}
        \rowcolor{headergray} 
        \multicolumn{2}{l}{\bfseries\textcolor{white}{Model Editing Case Study 2}} \\
        
        % --- Inputs Section ---
        \hline
        Editing Prompt & What is the status of Cebu flowerpecker? \\
        \hline
        Edit Target & \redtext{endangered species} \\
        \hline
        
        % --- Output Header ---
        \multicolumn{2}{c}{\textbf{\large Generation Output}} \\
        \hline
        
        % --- Adam-NSCL Row ---
        Adam-NSCL &  \redtext{endangered species} Data Deficient species \redtext{endangered species} \redtext{endangered species} Data Deficient species \redtext{endangered species} \redtext{endangered species} \redtext{endangered species} \redtext{endangered species} \redtext{endangered species} \redtext{endangered species} \redtext{endangered species} \redtext{endangered species} \redtext{endangered species} \redtext{endangered species} \redtext{endangered species} \redtext{endangered species} \redtext{endangered species} \redtext{endangered species} \redtext{endangered species} \redtext{endangered species} \redtext{endangered species} \redtext{endangered species} \\
        \hline
        
        % --- LocBFFT Row ---
        LocBFFT &  \redtext{endangered species}
 \\
        \hline
        
        % --- AlphaEdit Row ---
        AlphaEdit &  endangered
 \\
        \hline
        
        % --- MEMIT Row ---
        % MEMIT & 
 % \\
        \hline
        
        % --- UltraEdit Row ---
        UltraEdit &  critically \redtext{endangered species}
 \\
        \hline
        
        % --- LoRA Row ---
        % LoRA &  
 % \\
        \hline
        
        % --- FT Row ---
        FT &  \redtext{endangered species} species \redtext{endangered species} \redtext{endangered species} \redtext{endangered species} \redtext{endangered species} \redtext{endangered species} \redtext{endangered species} \redtext{endangered species} \redtext{endangered species} \redtext{endangered species} \redtext{endangered species} \redtext{endangered species} \redtext{endangered species} \redtext{endangered species} \redtext{endangered species} \redtext{endangered species} \redtext{endangered species} \redtext{endangered species} \redtext{endangered species} \redtext{endangered species} \redtext{endangered species} \redtext{endangered species} \redtext{endangered species} \redtext{endangered species} endangered \\
        \hline
        
        % --- JIGSAW Row ---
        \ourmethod &  \redtext{endangered species}
 \\
        \hline
        
    \end{tabularx}
\end{table}

\newpage
\section{Additional tables}\label{sec:additional_tables}
\begin{table}[t]
\centering
\caption{\textbf{Comparison of \ourmethod with existing methods on editing \llama in the teacher-forcing evaluation pipeline.} \textit{Rel}, \textit{gen}, \textbf{Spec} denote reliability, generality, and specificity, respectively. We perform model editing on 3,000 samples of three representative datasets and evaluate editing performance and base performance following teacher-forcing setup of \cite{meng2023massediting, meng2022locating, fang2025alphaedit}. Results that are the highest or within 5\% of the highest
results are highlighted in bold.}
\resizebox{0.8\textwidth}{!}{%
\begin{tabular}{lccccccccc}
\toprule
\multirow{2}{*}{\textbf{Method}} & \multicolumn{3}{c}{\textbf{ZsRE}} & \multicolumn{3}{c}{\textbf{CounterFact}} & \multicolumn{3}{c}{\textbf{WikiBigEdit}} \\
\cmidrule(lr){2-4} \cmidrule(lr){5-7} \cmidrule(lr){8-10}
 & \textbf{Rel} & \textbf{Gen} & \textbf{Spec} & \textbf{Rel} & \textbf{Gen} & \textbf{Spec} & \textbf{Rel} & \textbf{Gen} & \textbf{Spec} \\
\midrule
Llama 3 8B Instruct & 25.7 & 25.1 & 37.8 & 0.9 & 1.2 & 89.4 & 34.0 & 34.8 & 32.8 \\
\midrule
MEMIT & 0.0 & 0.0 & 0.0 & 0.0 & 0.0 & 49.4 & 0.5 & 0.5 & 0.0 \\
AlphaEdit & 86.7 & 77.8 & 32.4 & 94.3 & 72.0 & \textbf{69.1} & \textbf{95.0} & 89.0 & 42.0 \\
Adam-NSCL & \textbf{98.8} & \textbf{92.4} & 22.1 & \textbf{99.5} & \textbf{81.5} & 47.7 & \textbf{99.7} & \textbf{97.5} & 36.4 \\
LocBF-FT & \textbf{99.1} & \textbf{91.1} & \textbf{34.5} & \textbf{99.7} & 72.7 & 44.6 & \textbf{99.9} & \textbf{96.8} & 42.8 \\
UltraEdit & 61.9 & 57.3 & 31.5 & 28.0 & 18.7 & 51.4 & 87.4 & 84.7 & \textbf{47.8} \\
MEND & 0.0 & 0.0 & 0.1 & 0.0 & 0.0 & 0.0 & 0.0 & 0.0 & 0.0 \\
FT & \textbf{99.1} & \textbf{93.1} & 22.9 & \textbf{99.7} & \textbf{82.0} & 47.9 & \textbf{99.8} & \textbf{97.6} & 36.3 \\
FT Sequential & 79.7 & 76.6 & 16.8 & 78.6 & 59.8 & 51.6 & 93.5 & 90.5 & 29.0 \\
LoRA & 93.4 & 60.6 & 30.9 & 93.8 & 17.8 & 42.9 & \textbf{99.3} & 82.4 & 44.4 \\
LoRA Sequential & 36.8 & 32.7 & 21.7 & 20.9 & 10.4 & 57.0 & 70.2 & 65.4 & 36.0 \\
\rowcolor{jigsawGreen!20}
\ourmethod{} & \textbf{99.1} & \textbf{92.1} & 32.3 & \textbf{99.8} & 73.0 & 55.2 & \textbf{99.9} & \textbf{97.1} & 44.7 \\
\rowcolor{jigsawGreen!20}
\ourmethodseq{} & \textbf{98.3} & \textbf{91.4} & 30.2 & \textbf{99.5} & 62.9 & 52.3 & \textbf{99.9} & \textbf{96.7} & 39.5 \\
\bottomrule
\end{tabular}%
}
\label{tab:teacher_forced_benchmark}
\end{table}
\begin{table}[ht]
\centering
\caption{\textbf{Influence of scaling to larger editing dataset.} \textit{Rel} and \textit{Gen} denote reliability and generality, respectively. We perform model editing on 10,000 samples of ZsRE and evaluate editing performance with WILD framework and base performance with five representative benchmarks. Results that are the highest or within 5\% of the highest results are highlighted in bold.}
\resizebox{0.99\textwidth}{!}{%
\begin{tabular}{llccccccccc}
\toprule
\multirow{3}{*}{\textbf{Data}} & \multirow{3}{*}{\textbf{Method}} & \multicolumn{4}{c}{\textbf{Edited Capabilities}} & \multicolumn{5}{c}{\textbf{Base Capabilities}} \\
\cmidrule(lr){3-6} \cmidrule(lr){7-11}
 & & \multicolumn{2}{c}{\textbf{QA Context}} & \multicolumn{2}{c}{\textbf{No Context}} & & & & & \\
\cmidrule(lr){3-4} \cmidrule(lr){5-6}
 & & \textbf{Rel} & \textbf{Gen} & \textbf{Rel} & \textbf{Gen} & \textbf{MMLU} & \textbf{IFEval} & \textbf{TruthfulQA} & \textbf{ARC-C} & \textbf{GSM8K} \\
\midrule
% --- ZsRE 10K Edits ---
\multirow{6}{*}{\rotatebox[origin=c]{90}{\textbf{ZsRE 10K}}}
 & \llama & 2.0 & 1.5 & 2.9 & 2.1 & 69.5 & 69.3 & 50.7 & 58.0 & 73.5 \\
\cline{2-11}
 & LocBF-FT & 53.5 & 47.7 & 11.5 & 11.6 & \textbf{68.0} & \textbf{67.6} & 50.7 & 50.0 & \textbf{73.0} \\
 & UltraEdit & 20.1 & 16.7 & 12.6 & 10.4 & \textbf{67.9} & \textbf{68.9} & 49.8 & 46.0 & \textbf{73.0} \\
 & Adam-NSCL & 1.2 & 1.1 & 0.4 & 0.7 & \textbf{68.2} & 14.8 & \textbf{54.0} & 35.0 & 2.0 \\
 & AlphaEdit & 0.3 & 0.2 & 0.1 & 0.0 & 22.8 & 20.9 & \textbf{53.9} & 22.0 & 0.0 \\
 \rowcolor{jigsawGreen!20}
 & \ourmethod{} & \textbf{77.4} & \textbf{68.7} & \textbf{31.1} & \textbf{28.9} & \textbf{68.5} & \textbf{69.9} & 50.2 & \textbf{52.0} & \textbf{71.0} \\
\bottomrule
\end{tabular}
}%
\label{tab:benchmark_10k}
\end{table}
\begin{table}[ht]
\centering
\caption{\textbf{Comparison of \ourmethod with existing methods on editing \qwen.} \textit{Rel} and \textit{gen} denote reliability and generality, respectively. We perform model editing on 3,000 samples of ZsRE and evaluate editing performance with WILD framework and base performance with five representative benchmarks. Results that are the highest or within 5\% of the highest
results are highlighted in bold. }
\resizebox{0.99\textwidth}{!}{%
\begin{tabular}{llccccccccc}
\toprule
\multirow{3}{*}{\textbf{Data}} & \multirow{3}{*}{\textbf{Model}} & \multicolumn{4}{c}{\textbf{Edited Capabilities}} & \multicolumn{5}{c}{\textbf{Base Capabilities}} \\
\cmidrule(lr){3-6} \cmidrule(lr){7-11}
 & & \multicolumn{2}{c}{\textbf{QA Context}} & \multicolumn{2}{c}{\textbf{No Context}} & & & & & \\
\cmidrule(lr){3-4} \cmidrule(lr){5-6}
 & & \textbf{Rel} & \textbf{Gen} & \textbf{Rel} & \textbf{Gen} & \textbf{MMLU} & \textbf{IFEval} & \textbf{TruthfulQA} & \textbf{ARC-C} & \textbf{GSM8K} \\
\midrule
% --- ZsRE ---
\multirow{8}{*}{\rotatebox[origin=c]{90}{\textbf{ZsRE}}}
 & Qwen 2.5 1.5B & 3.5 & 4.0 & 2.3 & 2.0 & 61.9 & 48.3 & 50.9 & 52.0 & 58.0 \\
\cline{2-11}
 & FT & 35.4 & 29.6 & 32.2 & 25.5 & 50.0 & 24.8 & 49.8 & 34.5 & 35.5 \\
 & LocBF-FT & 71.4 & 52.9 & 38.0 & 30.6 & \textbf{59.6} & 42.0 & \textbf{54.6} & 44.0 & 54.0 \\
 & AlphaEdit & 7.2 & 4.3 & 6.2 & 4.2 & 24.9 & 12.4 & 44.7 & 21.5 & 2.0 \\
 & UltraEdit & 11.3 & 9.8 & 18.2 & 11.8 & \textbf{62.3} & \textbf{47.7} & \textbf{52.1} & \textbf{50.0} & 54.0 \\
 & Adam-NSCL & 62.6 & 50.5 & 21.4 & 15.3 & \textbf{59.3} & 38.0 & 46.0 & 44.0 & 32.0 \\
 \rowcolor{jigsawGreen!20} & \ourmethod{} (Batch) & \textbf{77.8} & \textbf{61.0} & 52.6 & 44.0 & 57.8 & 32.8 & 46.4 & 42.0 & \textbf{58.5} \\
 \rowcolor{jigsawGreen!20} & \ourmethod{} (Seq) & 55.5 & 40.7 & \textbf{77.7} & \textbf{51.6} & \textbf{59.3} & 39.5 & 46.0 & 42.0 & \textbf{59.0} \\
\midrule
% --- CounterFact ---
\multirow{8}{*}{\rotatebox[origin=c]{90}{\textbf{CounterFact}}}
 & Qwen 2.5 1.5B & 2.0 & 1.8 & 0.9 & 0.7 & 61.9 & 48.3 & 50.9 & 52.0 & 58.0 \\
\cline{2-11}
 & FT & 22.3 & 28.4 & 8.9 & 14.2 & 34.8 & 15.1 & 45.7 & 23.5 & 6.5 \\
 & LocBF-FT & 58.2 & 32.6 & 46.8 & 21.5 & \textbf{59.3} & 39.0 & \textbf{46.6} & 40.5 & 56.0 \\
 & AlphaEdit & 22.6 & 14.1 & 31.2 & 16.8 & 24.4 & 12.9 & \textbf{46.8} & 19.0 & 1.5 \\
 & UltraEdit & 10.8 & 8.5 & 14.4 & 5.9 & \textbf{62.4} & \textbf{41.9} & 44.8 & 41.5 & \textbf{62.0} \\
 & Adam-NSCL & 5.9 & 4.9 & 3.4 & 1.5 & \textbf{60.5} & 18.5 & \textbf{48.3} & 36.0 & 4.5 \\
 \rowcolor{jigsawGreen!20} & \ourmethod{} (Batch) & \textbf{63.3} & 34.4 & \textbf{67.0} & \textbf{29.9} & \textbf{61.5} & \textbf{40.5} & \textbf{47.3} & \textbf{44.0} & 58.5 \\
 \rowcolor{jigsawGreen!20} & \ourmethod{} (Seq) & \textbf{64.6} & \textbf{41.8} & 60.3 & 27.9 & 58.4 & \textbf{39.9} & \textbf{47.7} & \textbf{43.0} & 58.0 \\
\midrule
% --- WikiBigEdit ---
\multirow{8}{*}{\rotatebox[origin=c]{90}{\textbf{WikiBigEdit}}}
 & Qwen 2.5 1.5B & 8.4 & 8.6 & 7.0 & 6.4 & 61.9 & 48.3 & 50.9 & 52.0 & 58.0 \\
\cline{2-11}
 & FT & 59.5 & 50.4 & 42.2 & 37.0 & 54.3 & 30.7 & 46.2 & 39.5 & 52.0 \\
 & LocBF-FT & \textbf{76.8} & \textbf{61.9} & 66.0 & 55.2 & \textbf{60.4} & 34.8 & 46.1 & \textbf{43.5} & \textbf{58.0} \\
 & AlphaEdit & 0.7 & 0.7 & 1.5 & 1.3 & 24.4 & 13.2 & \textbf{48.9} & 23.5 & 1.0 \\
 & UltraEdit & 27.5 & 25.8 & 53.2 & 45.5 & \textbf{62.5} & \textbf{41.4} & 44.5 & \textbf{44.5} & \textbf{60.0} \\
 & Adam-NSCL & 31.9 & 28.4 & 11.6 & 10.4 & \textbf{62.3} & 36.2 & 46.4 & 41.5 & 33.0 \\
 \rowcolor{jigsawGreen!20} & \ourmethod{} (Batch) & 62.9 & 52.0 & 57.3 & 46.3 & \textbf{61.2} & 38.7 & \textbf{47.0} & \textbf{45.5} & \textbf{58.5} \\
 \rowcolor{jigsawGreen!20} & \ourmethod{} (Seq) & 53.4 & 43.3 & \textbf{83.4} & \textbf{60.0} & \textbf{59.9} & 34.0 & \textbf{47.9} & \textbf{44.5} & 55.0 \\
\bottomrule
\end{tabular}
}%
\label{tab:benchmark_qwen}
\end{table}
\begin{table}[ht]
\caption{\textbf{Effect of prompt masking during K-FAC calculation.} Even with larger number of tokens for computing K-FAC, prompt masking leads to suboptimal performance with \ourmethodseq.}
\centering
\begin{tabular}{l c}
\hline
Method & Rel\\
\hline
\ourmethod (chunk size = 100) & 71.1 \\
\ourmethod (chunk size = 500, prompt masking) & 12 \\
\hline
\end{tabular}
\label{tab:prompt_masking_ablation}
\end{table}

\begin{table}[ht]
\centering
\caption{\textbf{Influence of the size of capability dataset $n$ on editing performances and base capability preservation.} Across a range of $n$, we set $\gamma = 0.9$ for \ourmethod, perform model editing on 3,000 samples of ZsRE, and evaluate editing performance with WILD framework and base performance with five representative benchmarks. Results that are the highest or within 5\% of the highest
results are highlighted in bold. \ourmethod remains robust across a wide range of $n$. Highlighted model represents data used in~\Cref{tab:main_table}.}
\resizebox{0.99\textwidth}{!}{%
\begin{tabular}{llccccccccc}
\toprule
\multirow{3}{*}{\textbf{Data}} & \multirow{3}{*}{\textbf{Sample Size}} & \multicolumn{4}{c}{\textbf{Edited Capabilities}} & \multicolumn{5}{c}{\textbf{Base Capabilities}} \\
\cmidrule(lr){3-6} \cmidrule(lr){7-11}
 & & \multicolumn{2}{c}{\textbf{QA Context}} & \multicolumn{2}{c}{\textbf{No Context}} & & & & & \\
\cmidrule(lr){3-4} \cmidrule(lr){5-6}
 & & \textbf{Rel} & \textbf{Gen} & \textbf{Rel} & \textbf{Gen} & \textbf{MMLU} & \textbf{IFEval} & \textbf{TruthfulQA} & \textbf{ARC-C} & \textbf{GSM8K} \\
\midrule
% --- ZsRE ---
\multirow{10}{*}{\rotatebox[origin=c]{90}{\textbf{ZsRE}}}
 & \llama & 2.1 & 1.7 & 2.9 & 2.1 & 69.5 & 69.3 & 50.7 & 58.0 & 73.5 \\
\cline{2-11}
 & No Projection (FT) & 46.8 & 43.1 & 9.9 & 8.3 & \textbf{69.3} & 45.0 & 48.7 & 43.0 & 50.0 \\
 & $n=10$ & 53.6 & 48.5 & 10.6 & 9.3 & \textbf{69.1} & 48.8 & \textbf{50.8} & 42.5 & 57.5 \\
 & $n=50$ & 69.8 & \textbf{62.9} & 24.9 & 24.5 & \textbf{69.3} & 68.3 & \textbf{51.8} & 53.0 & \textbf{74.0} \\
 & $n=100$ & 74.2 & \textbf{66.0} & 35.8 & 31.4 & \textbf{69.4} & 68.1 & \textbf{50.4} & 52.0 & \textbf{75.0} \\
 & $n=500$ & \textbf{78.4} & \textbf{65.9} & \textbf{54.4} & \textbf{47.2} & \textbf{69.5} & \textbf{72.3} & \textbf{51.5} & 54.5 & \textbf{75.0} \\
 & $n=1000$ & \textbf{75.9} & \textbf{63.9} & 48.8 & 41.3 & \textbf{69.4} & \textbf{72.3} & \textbf{50.4} & 54.0 & \textbf{74.5} \\
\rowcolor{jigsawGreen!20}
 & $n=10000$ & 71.2 & 57.9 & 48.0 & 40.3 & \textbf{69.4} & 68.4 & \textbf{50.3} & \textbf{59.5} & \textbf{73.0} \\
 & $n=50000$ & 71.0 & 57.3 & 47.3 & 39.9 & \textbf{69.2} & \textbf{68.9} & \textbf{50.2} & \textbf{57.0} & \textbf{75.5} \\
 & $n=100000$ & 69.9 & 55.5 & \textbf{54.2} & 43.8 & \textbf{69.3} & 68.3 & \textbf{50.1} & 56.5 & \textbf{72.0} \\
\bottomrule
\end{tabular}
}%
\label{tab:ablation_n}
\end{table}
\begin{table}[ht]
\centering
\caption{\textbf{Influence of energy threshold $\gamma$ on editing performances and base capability preservation.} Across a range of $\gamma$, we set $n = 10,000$ for \ourmethod, perform model editing on 3,000 samples of three representative datasets, and evaluate editing performance with WILD framework and base performance with five representative benchmarks. Results that are the highest or within 5\% of the highest
results are highlighted in bold. \ourmethod remains robust across a wide range of $\gamma$.}
\resizebox{0.99\textwidth}{!}{%
\begin{tabular}{llccccccccc}
\toprule
\multirow{3}{*}{\textbf{Data}} & \multirow{3}{*}{\textbf{Energy threshold}} & \multicolumn{4}{c}{\textbf{Edited Capabilities}} & \multicolumn{5}{c}{\textbf{Base Capabilities}} \\
\cmidrule(lr){3-6} \cmidrule(lr){7-11}
 & & \multicolumn{2}{c}{\textbf{QA Context}} & \multicolumn{2}{c}{\textbf{No Context}} & & & & & \\
\cmidrule(lr){3-4} \cmidrule(lr){5-6}
 & & \textbf{Rel} & \textbf{Gen} & \textbf{Rel} & \textbf{Gen} & \textbf{MMLU} & \textbf{IFEval} & \textbf{TruthfulQA} & \textbf{ARC-C} & \textbf{GSM8K} \\
\midrule
% --- ZsRE ---
\multirow{8}{*}{\rotatebox[origin=c]{90}{\textbf{ZsRE}}}
 & \llama & 2.1 & 1.7 & 2.9 & 2.1 & 69.5 & 69.3 & 50.7 & 58.0 & 73.5 \\
\cline{2-11}
 & \ourmethod{} ($\gamma=0.5$) & \textbf{77.4} & \textbf{68.3} & 43.4 & 39.1 & \textbf{69.5} & \textbf{67.8} & \textbf{50.5} & 52.0 & \textbf{77.5} \\
 & \ourmethod{} ($\gamma=0.6$) & \textbf{77.8} & \textbf{67.8} & \textbf{56.0} & 48.0 & \textbf{69.5} & \textbf{70.2} & \textbf{51.0} & 53.5 & \textbf{75.5} \\
 & \ourmethod{} ($\gamma=0.7$) & \textbf{80.5} & \textbf{69.0} & \textbf{57.4} & \textbf{50.9} & \textbf{69.5} & \textbf{67.9} & \textbf{50.5} & 55.0 & \textbf{76.0} \\
 & \ourmethod{} ($\gamma=0.8$) & \textbf{80.3} & \textbf{68.3} & 52.3 & 46.0 & \textbf{69.2} & \textbf{66.7} & \textbf{50.1} & 56.0 & \textbf{77.0} \\
 & \ourmethod{} ($\gamma=0.9$) & 71.2 & 57.9 & 48.0 & 40.3 & \textbf{69.4} & \textbf{68.4} & \textbf{50.3} & \textbf{59.5} & 73.0 \\
 & \ourmethod{} ($\gamma=0.95$) & 62.7 & 48.5 & 38.5 & 31.7 & \textbf{69.4} & \textbf{68.4} & \textbf{50.4} & 56.0 & \textbf{76.0} \\
 & \ourmethod{} ($\gamma=0.99$) & 37.8 & 28.8 & 35.3 & 27.6 & \textbf{69.4} & \textbf{68.9} & \textbf{51.1} & \textbf{57.5} & 73.0 \\
\midrule
% --- CounterFact ---
\multirow{8}{*}{\rotatebox[origin=c]{90}{\textbf{CounterFact}}}
 & \llama & 1.2 & 1.0 & 0.3 & 0.6 & 69.5 & 69.3 & 50.7 & 58.0 & 73.5 \\
\cline{2-11}
 & \ourmethod{} ($\gamma=0.5$) & 45.5 & 31.5 & 5.7 & 7.2 & \textbf{68.5} & 50.4 & \textbf{51.4} & 50.0 & 43.5 \\
 & \ourmethod{} ($\gamma=0.6$) & 65.5 & 48.7 & 9.8 & 14.3 & \textbf{69.7} & 63.2 & \textbf{52.4} & \textbf{55.5} & \textbf{75.0} \\
 & \ourmethod{} ($\gamma=0.7$) & \textbf{75.7} & \textbf{57.3} & 15.5 & 20.4 & \textbf{69.4} & \textbf{66.8} & \textbf{51.7} & \textbf{55.0} & 72.5 \\
 & \ourmethod{} ($\gamma=0.8$) & \textbf{79.2} & \textbf{57.4} & 21.4 & 25.8 & \textbf{69.5} & \textbf{69.4} & \textbf{49.8} & \textbf{54.5} & \textbf{73.5} \\
 & \ourmethod{} ($\gamma=0.9$) & \textbf{79.4} & \textbf{55.9} & 38.4 & \textbf{32.4} & \textbf{69.3} & \textbf{67.5} & 49.5 & 54.0 & \textbf{76.5} \\
 & \ourmethod{} ($\gamma=0.95$) & 72.0 & 47.5 & \textbf{46.3} & \textbf{33.0} & \textbf{69.4} & \textbf{67.8} & \textbf{50.3} & \textbf{57.0} & \textbf{74.0} \\
 & \ourmethod{} ($\gamma=0.99$) & 51.6 & 27.7 & \textbf{45.3} & 26.8 & \textbf{69.4} & \textbf{68.2} & \textbf{51.7} & \textbf{56.0} & \textbf{76.5} \\
\midrule
% --- WikiBigEdit ---
\multirow{8}{*}{\rotatebox[origin=c]{90}{\textbf{WikiBigEdit}}}
 & \llama & 9.3 & 9.1 & 16.4 & 16.1 & 69.5 & 69.3 & 50.7 & 58.0 & 73.5 \\
\cline{2-11}
 & \ourmethod{} ($\gamma=0.5$) & 62.6 & 58.7 & 14.3 & 14.9 & \textbf{69.0} & \textbf{68.8} & \textbf{50.6} & 55.0 & 72.5 \\
 & \ourmethod{} ($\gamma=0.6$) & 66.5 & 60.8 & 17.4 & 19.1 & \textbf{69.3} & \textbf{68.2} & \textbf{51.4} & 53.0 & \textbf{75.0} \\
 & \ourmethod{} ($\gamma=0.7$) & \textbf{76.2} & \textbf{69.2} & 26.3 & 27.8 & \textbf{69.2} & \textbf{68.8} & \textbf{51.1} & 54.0 & \textbf{76.5} \\
 & \ourmethod{} ($\gamma=0.8$) & \textbf{77.2} & \textbf{72.1} & 21.2 & 24.4 & \textbf{69.4} & \textbf{69.1} & \textbf{50.4} & 55.0 & \textbf{76.5} \\
 & \ourmethod{} ($\gamma=0.9$) & \textbf{77.0} & \textbf{70.2} & 28.4 & 30.5 & \textbf{69.3} & \textbf{70.5} & \textbf{51.8} & 55.0 & \textbf{74.0} \\
 & \ourmethod{} ($\gamma=0.95$) & \textbf{76.9} & \textbf{68.9} & 23.4 & 27.3 & \textbf{69.2} & 62.6 & \textbf{51.2} & \textbf{57.5} & \textbf{74.5} \\
 & \ourmethod{} ($\gamma=0.99$) & 67.6 & 57.2 & \textbf{34.4} & \textbf{32.3} & \textbf{69.3} & 62.5 & \textbf{52.6} & \textbf{58.0} & 70.5 \\
\bottomrule
\end{tabular}
}%
\label{tab:ablation_gamma}
\end{table}

\end{document}